    \newcolumntype{P}[1]{>{\centering\arraybackslash}p{#1}}
    \newcolumntype{M}[1]{>{\centering\arraybackslash}m{#1}}
\newtheorem{problem}{Problem}
\newtheorem{lemma}{Lemma}
\newtheorem{theorem}{Theorem}
\newcommand{\reflect@squig}[2]{
  \reflectbox{$\m@th#1\rightsquigarrow$}
}
\providecommand{\leftlsquigarrow}{
  \mathrel{\mathpalette\reflect@squig\relax}
}
\newcommand{\mat}[1]{\mathbf{#1}}
\newcommand{\vect}[1]{\mathbf{#1}}
\newcommand{\DTDG}{\mathcal{G}} 
\newcommand{\G}[1]{\mathcal{G}_{#1}} 
\newcommand{\V}[1]{\mathcal{V}_{#1}} 
\newcommand{\E}[1]{\mathcal{E}_{#1}} 
\newcommand{\A}[1]{\mat{A}_{#1}} 
\newcommand{\Ahat}[1]{\mat{\hat{A}}_{#1}} 
\newcommand{\Dhat}[1]{\mat{\hat{D}}_{#1}} 
\newcommand{\Atilde}[1]{\mat{\tilde{\mathbfcal{A}}}_{#1}} 
\newcommand{\matH}[1]{\mat{H}_{#1}} 
\newcommand{\I}[1]{\mathbfcal{I}_{#1}}
\newcommand{\F}[1]{\mat{F}_{#1}} 
\newcommand{\X}[1]{\mathbfcal{X}_{#1}} 
\newcommand{\Xhat}[1]{\mathbfcal{\hat{X}}_{#1}} 
\newcommand{\Xtilde}[1]{\mathbfcal{\tilde{X}}_{#1}} 
\newcommand{\matS}[1]{\mathbfcal{S}_{#1}} 
\newcommand{\T}[1]{\mathbfcal{T}_{#1}} 
\newcommand{\lmd}[1]{\boldsymbol{\lambda}_{#1}} 
\newcommand{\Lrwr}[1]{\mathbfcal{L}_{#1}^{\text{\normalfont rwr}}}
\newcommand{\matL}[1]{\mat{L}_{#1}} 
\newcommand{\wt}{{\color{white}{\scriptsize${}^\blacktriangle$}}}
\newcommand{\rt}{{\color{red}{\scriptsize${}^\blacktriangle$}}}
\newcommand{\bt}{{\color{blue}\scriptsize${}^\blacktriangledown$}}
\newcommand{\lwt}{{\color{white}{\footnotesize${}^\blacktriangle$}}}
\newcommand{\lrt}{{\color{red}{\footnotesize${}^\blacktriangle$}}}
\newcommand{\lbt}{{\color{blue}\footnotesize${}^\blacktriangledown$}}
\newcommand{\PI}[1]{\vect{x}_{#1}} 
\newcommand{\method}{\textsc{TiaRa}\xspace}
\DeclareMathAlphabet\mathbfcal{OMS}{cmsy}{b}{n}
\begin{document}

\title{Time-aware Random Walk Diffusion to Improve Dynamic Graph Learning}
\author {
    Jong-whi Lee and
    Jinhong Jung
}
\affiliations {
    Department of Computer Science and Engineering, Jeonbuk National University, South Korea \\
    jong.whi.lee@jbnu.ac.kr, jinhongjung@jbnu.ac.kr
}

\maketitle

\begin{abstract}
How can we augment a dynamic graph for improving the performance of dynamic graph neural networks? Graph augmentation has been widely utilized to boost the learning performance of GNN-based models. However, most existing approaches only enhance spatial structure within an input static graph by transforming the graph, and do not consider dynamics caused by time such as temporal locality, i.e., recent edges are more influential than earlier ones, which remains challenging for dynamic graph augmentation.
In this work, we propose \method (Time-aware Random Walk Diffusion), a novel diffusion-based method for augmenting a dynamic graph represented as a discrete-time sequence of graph snapshots. For this purpose, we first design a time-aware random walk proximity so that a surfer can walk along the time dimension as well as edges, resulting in spatially and temporally localized scores. We then derive our diffusion matrices based on the time-aware random walk, and show they become enhanced adjacency matrices that both spatial and temporal localities are augmented. Throughout extensive experiments, we demonstrate that \method effectively augments a given dynamic graph, and leads to significant improvements in dynamic GNN models for various graph datasets and tasks.

\end{abstract}

\section{Introduction}
Dynamic graphs represent various real-world relationships that dynamically occur over time.
Learning such dynamic graphs has recently attracted considerable attention from machine learning communities~\cite{skarding2021foundations,han2021dynamic}, and plays a crucial role in diverse applications such as link prediction~\cite{yang2021discrete,ParejaDCMSKKSL20},
node or edge classification~\cite{xu2019spatio,ParejaDCMSKKSL20}, time-series traffic forecasting~\cite{wu2020connecting,guo2019attention}, knowledge completion~\cite{DBLP:conf/kdd/JungJK21}, and pandemic forecasting~\cite{panagopoulos2021transfer}.
Over the last years, many researchers have put tremendous effort into developing interesting methods by sophisticatedly fusing GNNs and recurrent neural networks (RNN) or attention mechanisms for continuous-time~\cite{XuRKKA20,tgnicmlgrl2020} and discrete-time~\cite{seo2018structured,ParejaDCMSKKSL20,yang2021discrete} dynamic graphs.

With the astonishing progress of GNNs, diverse augmentation techniques~\cite{zhao2022graph,DBLP:conf/www/YooSK22} have been proposed to increase the generalization power of GNN models, especially on a static graph.
Previous approaches mainly transform the topological structure of the input graph.
For example, drop-based methods stochastically remove a certain number of edges~\cite{Rong2020DropEdge} or nodes~\cite{feng2020graph} at each training epoch in a similar manner to dropout regularization.
On the contrary, diffusion methods~\cite{klicpera2019diffusion} insert additional edges having weights scored by graph diffusions such as Personalized PageRank~\cite{tong2006fast}, thereby augmenting a spatial locality around each node and improving graph convolution.

However, the aforementioned techniques assume to augment data within a static graph, and dynamic graph augmentation problem has not yet been comprehensively studied.
Unlikely static graphs, dynamic graphs change or evolve over time by their nature; thus, dynamic graph augmentation needs to simultaneously consider temporal dynamics as well as spatial structure.
More specifically, as verified in previous works~\cite{tgnicmlgrl2020,shin2017wrs,DBLP:journals/vldb/LeeSF20},
real-world dynamic graphs exhibit \textit{temporal locality} indicating that graph objects such as nodes and triangles tend to be more affected by more recent edges than older ones,
i.e., edges closer to a specific object in time are more likely to provide important information.
Naively applying a static augmentation method to each time step cannot consider such a temporal locality.

In this work, we propose \method (\underline{Ti}me-\underline{a}ware \underline{Ra}ndom Walk Diffusion), a novel diffusion-based augmentation method for a discrete-time dynamic graph which is represented by a temporal sequence of graph snapshots.
\method aims to augment both spatial and temporal localities of each graph snapshot.
For this purpose, we design a time-aware random walk that a surfer randomly moves around nodes or a time-axis to measure spatially and temporally localized scores.
We then derive time-aware random walk diffusion from the scores, and interpret it as the combination of spatial and temporal augmenters.
Our diffusion matrices are used as augmented adjacency matrices for any dynamic GNN models in discrete-time domain.
We further adopt approximate techniques such as power iteration and sparsification to reduce a heavy cost for computing the diffusion matrices.

Our contributions are summarized as follows:
\setlist[itemize]{left=5mm}
\begin{itemize}
    \item {\textbf{Method.}
        We propose \method, a novel and model-agnostic method for dynamic graph augmentation using time-aware random walks.
        \method strengthens not only a spatial locality but also a temporal locality of a dynamic graph so that dynamic GNNs perform better.
    }
    \item {\textbf{Analysis.}
        We analyze how \method augments both spatial and temporal localities (Theorem~\ref{theorem:trwr_diff_closed}) and complexities of \method (Theorem~\ref{theorem:comp}) in real dynamic graphs.
    }
    \item {\textbf{Experiments.}
        We demonstrate that \method effectively augments a given dynamic graph, and leads to consistent improvements in GNNs for temporal link prediction and node classification tasks.
    }
\end{itemize}

The code of \method and the datasets are publicly available at \textbf{https://github.com/dev-jwel/TiaRa}.

\section{Related Work}
\textbf{Augmentation for Static GNNs.}
Graph augmentation~\cite{zhao2022graph} aims to reduce over-fitting for training GNN models by modifying an input graph.
DropEdge~\cite{Rong2020DropEdge} or DropNode~\cite{feng2020graph} randomly drop edges or nodes at each epoch.
These augment the diversity of the input graph by creating different copies sampled from the graph.
GDC~\cite{klicpera2019diffusion} adds new edges weighted by a graph diffusion derived from node proximities.
GDC boosts a spatial locality of the graph so that a GNN can consider adjacent nodes as well as distant ones during their convolutions, enhancing its representation power.
However, most of existing methods are limited to augment dynamic graphs because they do not consider temporal properties.

\textbf{GNNs and Augmentation for Dynamic Graphs.}
Dynamic graphs~\cite{kazemi2020representation} are categorized as: discrete-time dynamic graphs (DTDG) and continuous-time dynamic graphs (CTDG) where a DTDG is represented as a sequence of graph snapshots with multiple discrete time steps while a CTDG is represented as a set of temporal edges whose time-stamps have continuous values.
It is straightforward to convert a CTDG to a DTDG by distributing the continuous-time edges into multiple bins in chronological order, but the reverse is not possible because continuous-time values are generally lacked in most DTDGs~\cite{yang2021discrete}, i.e., models for DTDGs can be applied to CTDGs, but the reverse is rather limited.
Hence, we narrow our focus to representation learning on DTDGs.

Dynamic GNNs have rapidly advanced under the framework that closely integrates GNNs and temporal sequence models such as RNNs to capture spatial and temporal relations on dynamic graphs~\cite{skarding2021foundations}.
GCRN~\cite{seo2018structured} uses a GCN to produce node embeddings on each graph snapshot, and then forwards them to an LSTM for modeling temporal dynamics.
STAR~\cite{xu2019spatio} utilizes a GRU combined with spatial and temporal attentions.
DySat~\cite{sankar2020dysat} employs a self-attention strategy to aggregate  spatial neighborhood and temporal dynamics.
EvolveGCN~\cite{ParejaDCMSKKSL20} evolves the parameters of GCNs using RNNs.
To consider hierarchical properties in real graphs, HTGN~\cite{yang2021discrete} extends the framework to hyperbolic space.

As a related method, MeTA~\cite{wang2021adaptive} adaptively augments a temporal graph based on predictions of a temporal graph network, which perturbs time and removes or adds edges.
However, it is difficult to employ MeTA for the aforementioned DTDG models because MeTA is  designed for CTDGs requiring continuous-time values.

\section{Preliminaries}
\textbf{Random Walk with Restart (RWR).}
Our work is related to RWR which measures node similarity scores that are spatially localized to seed node $s$~\cite{nassar2015strong}, i.e., scores of nearby nodes highly associated with $s$ are high while those of distant nodes are low.
Diffusion methods such as GDC exploit RWR to augment a spatial locality.

Let $\PI{s}$ be a vector of RWR scores w.r.t. the seed node $s$.
Given a row-normalized adjacency matrix $\Atilde{}$ and a restart probability $\alpha$, the vector $\PI{s}$ is represented as follows:
\begin{align*}
    \PI{s} = (1-\alpha)\Atilde{}^{\top}\PI{s} + \alpha\vect{i}_{s} \Leftrightarrow
    \PI{s} = \alpha\matL{}^{-1}\vect{i}_{s} \Leftrightarrow \PI{s} = \Lrwr{}\vect{i}_{s}
\end{align*}
where $\matL{} = \I{n} - (1-\alpha)\Atilde{}^{\top}$ is the random-walk normalized Laplacian matrix, and $\vect{i}_{s}$ is the $s$-th unit vector.
Notice that $\Lrwr{} = \alpha \matL{}^{-1}$ is a column-stochastic transition matrix interpreted as a diffusion kernel that diffuses a given distribution such as $\vect{i}_{s}$ on the graph through RWR.

\textbf{Problem Formulation.}
A discrete-time dynamic graph (DTDG) $\DTDG$ is represented as a sequence $\{\G{1},\cdots,\G{T}\}$ of snapshots in a chronological order where $T$ is the number of time steps~\cite{skarding2021foundations}.
Each snapshot $\G{t}=(\V{}, \E{t}, \F{t})$ is a graph with a shared set $\V{}$ of nodes and a set $\E{t}$ of edges at time $t$ where $n=|\V{}|$ is the number of nodes.
$\F{t}\in \mathbb{R}^{n \times d}$ is an initial node feature matrix where $d$ is a feature dimension, and $\A{t} \in \mathbb{R}^{n \times n}$ denotes the sparse and self-looped adjacency matrix of $\G{t}$.
The node representation learning on the dynamic graph $\DTDG$ aims to learn a function $\mathcal{F}_{\Theta}(\cdot)$ parameterized by $\Theta$ and produce hidden node embeddings $\matH{t} \in \mathbb{R}^{n \times d}$ for each time $t$, represented as:
\begin{equation}
    \label{eq:dtdg_gnn}
    \matH{t} = \mathcal{F}_{\Theta}(\Atilde{t}, \F{t}, \matH{t-1})
\end{equation}
where $\Atilde{t}$ is a normalized adjacency matrix of $\A{t}$, and $\matH{t-1}$ contains the latest hidden embeddings before time $t$.
The above framework of Equation~\eqref{eq:dtdg_gnn} is generally adopted in  existing methods~\cite{seo2018structured,ParejaDCMSKKSL20,yang2021discrete} for learning DTDGs where $\mathcal{F}_{\Theta}(\cdot)$ is usually designed by the combination of GNNs and RNNs.

\begin{figure*}[t]
    \centering
    \includegraphics[width=1.0\linewidth]{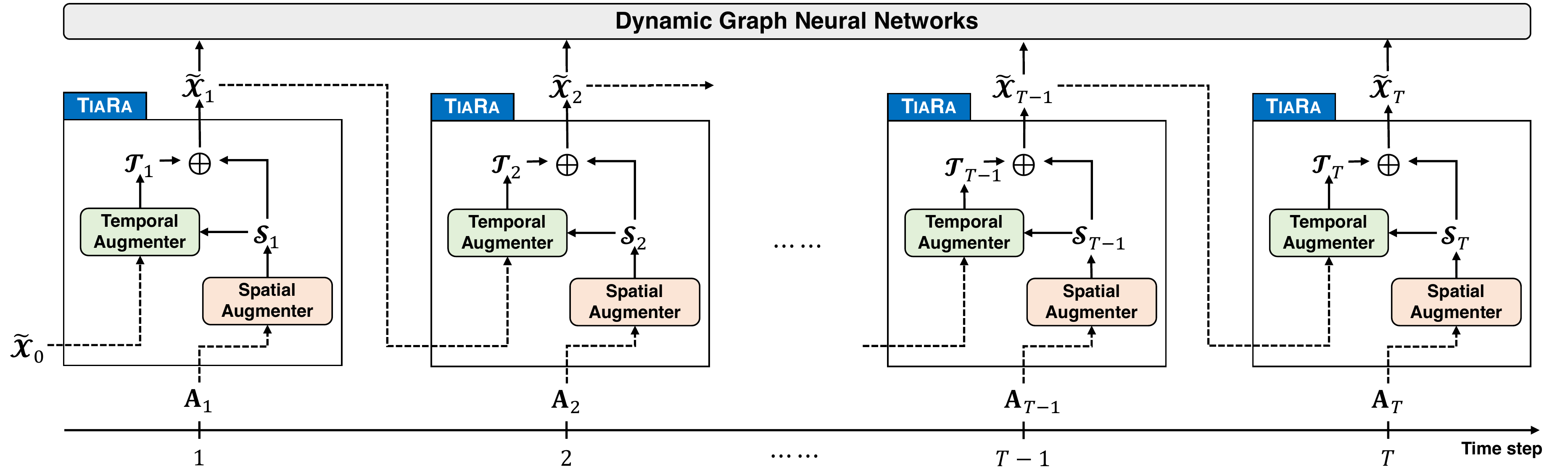}
    \caption{
        \label{fig:overview}
        Overall architecture of \method. Given the adjacency matrix $\A{t}$ at time $t$, \method outputs a time-aware random walk diffusion matrix $\Xtilde{t}$ combined with spatial augmenter $\matS{t}$ and temporal augmenter $\T{t}$ after sparsification.
    }
\end{figure*}

\begin{problem}[Dynamic Graph Augmentation]
\label{prob:aug}
Given a temporal sequence $\{\A{1}, \cdots, \A{T}\}$ of $\DTDG$, the problem is to generate a sequence of new adjacency matrices improving the performance of a model $\mathcal{F}_{\Theta}(\cdot)$. \hfill\qedsymbol
\end{problem}

\section{Proposed Method}
We depict the overall framework of \method in Figure~\ref{fig:overview}.
Given $\{\A{1}, \cdots, \A{T}\}$ of a dynamic graph $\DTDG$, our \method aims to produce a time-aware random walk diffusion matrix $\Xtilde{t}\in \mathbb{R}^{n \times n}$ for each time step $t$ using two diffusion based modules, called \textit{spatial} and \textit{temporal} augmenters.

The spatial augmenter enhances a spatial locality of $\A{t}$ using random walks, resulting in a spatial diffusion matrix $\matS{t}$.
The temporal augmenter receives the previous $\Xtilde{t-1}$ that contains information squashed from the initial time to $t-1$, and then disseminates it through $\matS{t}$ at the current $t$.
This leads to a temporal diffusion matrix $\T{t}$ in which a temporal locality is magnified.
Finally, \method linearly combines $\matS{t}$ and $\T{t}$, and sparsifies to form $\Xtilde{t}$.
We replace each adjacency matrix $\A{t}$ with $\Xtilde{t}$ for the inputs of dynamic GNN models.
If necessary, we simply use edges of the graph represented by $\Xtilde{t}$ without weights, or make the graph undirected by using $(\Xtilde{t}+\Xtilde{t}^{\top})/2$ after the sparsification.

\subsection{Time-aware Random Walk with Restart}
It is limited to directly employ RWR in a dynamic graph because RWR measures only spatially localized scores in a single static graph.
In this section, we extend RWR to Time-aware RWR (TRWR) so that TRWR produces node-to-node scores which are spatially and temporally localized.

One idea for TRWR is to virtually connect identical nodes from $\G{t}$ to $\G{t+1}$ for each time step $t$~\cite{DBLP:conf/kdd/0002S021} as shown in Figure~\ref{fig:trwr_idea}.
Then, a random surfer not only moves around the current $\G{t}$ but also jumps to the next $\G{t+1}$; thus, the surfer becomes time-aware.
In the beginning, the surfer starts from a seed node $s$ at the initial time step (e.g., $t=1$).
After a few movements, suppose the surfer is at node $u$ in $\G{t}$.
Then, it takes one of the following actions:
\setlist[itemize]{left=2mm}
\begin{itemize}
    \item {
        \textbf{Action 1) Random walk.}
        The surfer randomly moves to one of the neighbors from node $u$ in the current graph $\G{t}$ with probability $1 - \alpha - \beta$.
    }
    \item {
        \textbf{Action 2) Restart.}
        The surfer goes back to the seed node $s$ in $\G{t}$ with probability $\alpha$.
    }
    \item {
        \textbf{Action 3) Time travel.}
        The surfer does time travel from node $u$ in $\G{t}$ to that node in $\G{t+1}$ with probability $\beta$.
    }
\end{itemize}
where $\alpha$ and $\beta$ are called restart and time travel probabilities, respectively, and $0 < \alpha + \beta < 1$.
Note that we do not allow the surfer to move backward from $\G{t+1}$ to $\G{t}$ because the future information at time $t+1$ should be prevented when we make a prediction at time $t$.

\begin{figure}[t]
    \centering
    \includegraphics[width=0.9\linewidth]{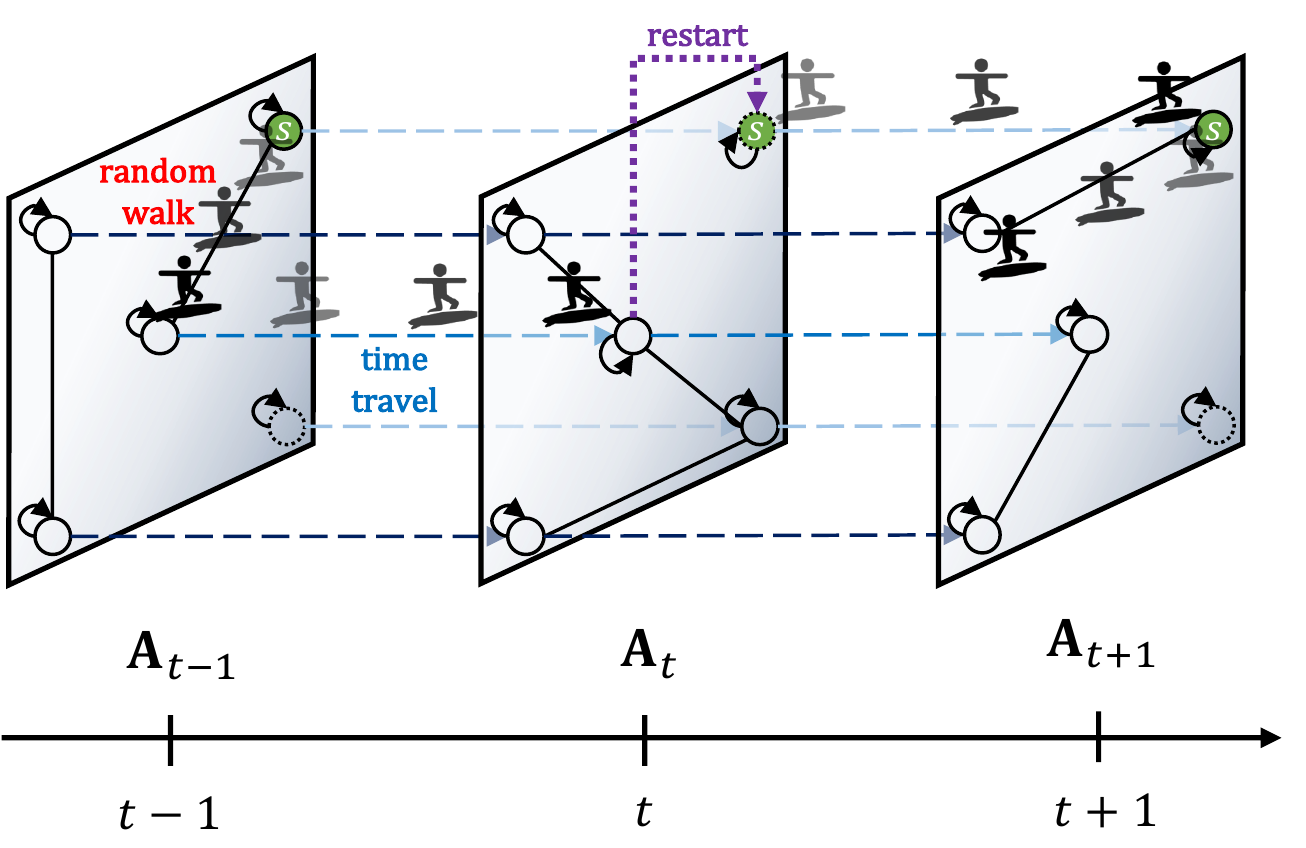}
    \caption{
        \label{fig:trwr_idea}
        Illustration of how a time-aware random surfer moves around a dynamic graph where dashed arrows indicate virtually connected edges along the time-axis.
    }
\end{figure}

Through TRWR, the vector $\PI{t} \in \mathbb{R}^{n}$ of stationary probabilities that the surfer visits each node from the seed node $s$ in $\G{t}$ is recursively represented as follows:
\begin{equation}
    \label{eq:trwr}
    \PI{t,s}
    =
    \underbrace{(1 - \alpha - \beta) \Atilde{t}^{\top} \PI{t,s}}_{\text{Random walk}}
    +
    \underbrace{\alpha\vect{i}_{s}}_{{\large\phantom{1}}\text{Restart}{\large\phantom{1}}}
    +
    \underbrace{\beta \PI{t-1,s}}_{\text{Time travel}}
\end{equation}
where $\vect{i}_{s}$ is the $s$-th unit vector of size $n$.
$\Atilde{t}$ is a row-normalized matrix of $\A{t}$ (i.e., $\Atilde{t}=\mat{D}_{t}^{-1}\A{t}$ where $\A{t}$ is a self-looped adjacency matrix and $\mat{D}_{t}$ is a diagonal out-degree matrix of $\A{t}$).
If $t=0$, we define $\PI{0,s}$ as $\vect{i}_{s}$.

In the above equation, the random walk part propagates scores of $\PI{t,s}$ over ${\Atilde{t}}$.
The restart part makes the scores spatially localized around the seed node $s$, which is controlled by $\alpha$.
The time travel part injects scores of the previous $\PI{t-1,s}$ to make $\PI{t,s}$ temporally localized, which is controlled by $\beta$.
Notice TRWR extends RWR to a discrete-time dynamic graph, i.e., $\beta=0$ leads to RWR scores on each graph snapshot without considering temporal information.

\subsection{Time-aware Random Walk Diffusion Matrices}
In Equation~\eqref{eq:trwr}, $\PI{t, s}\in \mathbb{R}^{n \times 1}$ is a column vector of a probability distribution w.r.t. a seed node $s$.
For all seeds $s \in \V{}$, we horizontally stack $\{\PI{t, s}\}$ to form $\X{t}\in \mathbb{R}^{n \times n}$ such that $\PI{t, s}$ is the $s$-th column of $\X{t}$, i.e., $\PI{t, s} = \X{t}\vect{i}_{s}$.
We call $\X{t}$ a time-aware random walk diffusion matrix at time $t$.
The derivation of $\X{t}$ starts by moving the term of the random walk to the left side in Equation~\eqref{eq:trwr} as follows:
\begin{align*}
    \begin{split}
    \left( \I{n} - (1-\alpha-\beta) \Atilde{t}^{\top} \right) \PI{t,s} = \alpha\vect{i}_{s} + \beta\PI{t-1,s}
    \end{split}
\end{align*}

Let $\matL{t} \coloneqq \I{n} - (1 - \alpha - \beta) \Atilde{t}^{\top}$ where $\I{n}$ is an $n \times n$ identity matrix, and $\PI{t-1, s} = \X{t-1}\vect{i}_{s}$ as described above.
Thus, $\PI{t,s}$ is written as the following:
\begin{align}
    \PI{t,s} &= \matL{t}^{-1}\left(\alpha\vect{i}_{s} + \beta\X{t-1}\vect{i}_{s}\right) \nonumber \\
             &= \left(\alpha\matL{t}^{-1}\I{n} + \beta\matL{t}^{-1}\X{t-1}\right)\vect{i}_{s} = \X{t}\vect{i}_{s} \label{eq:trwr_diff}
\end{align}
where $\X{t}=\alpha \matL{t}^{-1}\I{n} + \beta \matL{t}^{-1}\X{t-1}$ for $t>0$, and $\X{0}=\I{n}$ because $\PI{0,s}$ is defined as $\vect{i}_{s}$.

\textbf{Spatial and Temporal Augmenters.}
We obtain the recurrence relation of $\X{t}$ from Equation~\eqref{eq:trwr_diff}, and further rearrange it to interpret the process as follows:
\begin{align*}
    \begin{split}
    \X{t} &= \alpha\matL{t}^{-1}\I{n} + \beta\matL{t}^{-1}\X{t-1} \\
          &= \frac{\alpha}{\alpha+\beta} \left[(\alpha+\beta)\matL{t}^{-1}\I{n}\right]
          + \frac{\beta}{\alpha+\beta} \left[(\alpha+\beta)\matL{t}^{-1}\X{t-1}\right]
    \end{split}
\end{align*}

In the above, we set $\Lrwr{t} = (\alpha + \beta) \matL{t}^{-1}$ which is the diffusion kernel by RWR on the graph $\G{t}$ where its restart probability is $\alpha + \beta$.
Let $\gamma = \beta/(\alpha + \beta)$ where $0 < \gamma < 1$; then, $\X{t}$ is represented as follows:
\begin{align}
    \begin{split}
    \X{t} &= (1-\gamma)\underbrace{\left(\Lrwr{t}\I{n}\right)}_{\matS{t}} + \gamma\underbrace{\left(\Lrwr{t}\X{t-1}\right)}_{\T{t}=\matS{t}\X{t-1}} \label{eq:trwr_diff_rec}
    \end{split}
\end{align}
where $\matS{t}$ is a spatial diffusion matrix, and $\T{t}$ is a temporal diffusion matrix.

The meaning of $\matS{t}$ is the result of diffusing the $s$-th column $\vect{i}_{s}$ of $\I{n}$ through $\Lrwr{t}$ for each node $s$.
This is interpreted as the augmentation of a spatial locality of each node through RWR within $\G{t}$.
On the other hand, $\T{t}$ is the result of diffusing $\PI{s, t-1}$ of $\X{t-1}$ through $\Lrwr{t}$ for each node $s$.
Note that $\PI{s,t-1}$ contains the probabilities that the surfer visits each node starting from node $s$ during the travel from the initial time to $t-1$.
Thus, it spreads the past proximities of $\PI{s,t-1}$ in the current $\G{t}$ through $\Lrwr{t}$, which consequently reflects the temporal information to $\G{t}$.

The final diffusion matrix $\X{t}$ is a convex combination between $\matS{t}$ and $\T{t}$ w.r.t. $\gamma$, which is denoted by $\oplus$ in Figure~\ref{fig:overview}.
Notice that $\X{t}$ is a column stochastic transition matrix for every time step $t$, which is proved in Lemma~\ref{lemma:transition}, implying that as an augmented adjacency matrix, $\Atilde{t}$ can be replaced with $\X{t}^{\top}$ for the input of GNNs in Equation~\eqref{eq:dtdg_gnn}.

\textbf{Interpretation.}
We further analyze how $\X{t}$ reflects the spatial and temporal information of the input dynamic graph.
For this purpose, we first obtain the closed-form expression of $\X{t}$ which is described in Theorem~\ref{theorem:trwr_diff_closed}.
\begin{theorem}
\label{theorem:trwr_diff_closed}
The closed-form expression of $\X{t}$ is:
\begin{align}
    \label{eq:trwr_diff_closed}
    \X{t} = (1-\gamma)\left( \sum_{i = 0}^{t-2} \gamma^{i} \Lrwr{t \leftlsquigarrow t-i} \right) + \gamma^{t-1}\Lrwr{t\leftlsquigarrow1}
\end{align}
where $\Lrwr{j \leftlsquigarrow i}\!=\!\Lrwr{j} \Lrwr{j-1} \!\cdots\! \Lrwr{i}$ for $j\!>\!i$, and $\Lrwr{i \leftlsquigarrow i}\!=\!\Lrwr{i}$.
\end{theorem}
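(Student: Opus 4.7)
The plan is to proceed by induction on $t$, starting from the one-step recurrence
\[
\X{t} = (1-\gamma)\,\Lrwr{t}\I{n} + \gamma\,\Lrwr{t}\X{t-1}
\]
derived in Equation~\eqref{eq:trwr_diff_rec}, together with the base case $\X{0}=\I{n}$.

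For the base case $t=1$, I would verify directly: plugging $\X{0}=\I{n}$ into the recurrence gives $\X{1} = (1-\gamma)\Lrwr{1} + \gamma\Lrwr{1} = \Lrwr{1}$, which matches the right-hand side of Equation~\eqref{eq:trwr_diff_closed} since the sum over $i=0,\dots,t-2$ is empty and the residual term equals $\gamma^{0}\Lrwr{1\leftlsquigarrow 1}=\Lrwr{1}$.

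For the inductive step, assuming the formula holds at time $t-1$, I would substitute the hypothesis into $\gamma\Lrwr{t}\X{t-1}$, absorb the factor $\Lrwr{t}$ on the left into every product $\Lrwr{(t-1)\leftlsquigarrow(t-1-i)}$ to promote it to $\Lrwr{t\leftlsquigarrow(t-1-i)}$, and also promote $\gamma^{t-2}\Lrwr{(t-1)\leftlsquigarrow 1}$ to $\gamma^{t-1}\Lrwr{t\leftlsquigarrow 1}$. A reindexing $j=i+1$ then turns the telescoped sum into $(1-\gamma)\sum_{j=1}^{t-2}\gamma^{j}\Lrwr{t\leftlsquigarrow(t-j)}$. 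Finally, adding the leading term $(1-\gamma)\Lrwr{t} = (1-\gamma)\gamma^{0}\Lrwr{t\leftlsquigarrow t}$ from the recurrence supplies the missing $j=0$ summand, yielding exactly the claimed closed form at time $t$.

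The argument is essentially bookkeeping: the only place care is needed is in the reindexing, where one must check that (i) the index shift lines up so that the $j=0$ term is exactly what the non-hypothesis part of the recurrence contributes, and (ii) the ``residual'' power $\gamma^{t-1}$ and the fully-telescoped product $\Lrwr{t\leftlsquigarrow 1}$ emerge from the inductive residual by left-multiplying one additional $\Lrwr{t}$ and picking up one additional factor of $\gamma$. No analytical difficulty is expected; the statement is a purely algebraic unrolling of the linear recurrence, and the convex-combination structure $(1-\gamma)+\gamma=1$ is preserved automatically at each step.
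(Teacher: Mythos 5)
Your proposal is correct and follows essentially the same route as the paper's own proof: induction on $t$ using the recurrence of Equation~\eqref{eq:trwr_diff_rec}, with base case $\X{1}=\Lrwr{1}$, substitution of the inductive hypothesis into the $\gamma\Lrwr{t}\X{t-1}$ term, and a reindexing that absorbs the $(1-\gamma)\Lrwr{t}$ term as the $j=0$ summand. The bookkeeping you describe (index shift, promotion of the residual product to $\gamma^{t-1}\Lrwr{t\leftlsquigarrow 1}$) matches the paper's calculation exactly.
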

\begin{proof}
It is proved by mathematical induction, and the detailed proof is described in Appendix~\ref{sec:appendix:proof}.
\end{proof}

In the theorem, $\Lrwr{j \leftlsquigarrow i}$ indicates a random walk diffusion traveling from former time $i$ toward latter time $j$.
According to Equation~\eqref{eq:trwr_diff_closed}, $\X{t}$ is concisely represented as:
{
\begin{align*}
    \X{t}\!\propto\!\underbrace{\overbrace{
        \gamma^{0}\Lrwr{t}+\gamma^{1}\Lrwr{t \leftlsquigarrow t-1} +\!\cdots\!+ \gamma^{t-2}\Lrwr{t \leftlsquigarrow 2}+\frac{\gamma^{t-1}}{1\!-\!\gamma}\Lrwr{t \leftlsquigarrow 1} 
    }^{
        \text{Augmentation of spatial and temporal localities} 
    }}_{
        \text{$\Longleftarrow$ Emphasized {\color{white}$\cdots\cdots\cdots\cdots\cdots\cdots\cdots\cdots\cdots\cdots\cdots$} Decayed $\Longrightarrow$} 
    }
\end{align*}
}

Note that $\X{t}$ is more affected by the information close to time $t$ than that passed from the distant past.
The influence of $\Lrwr{t \leftlsquigarrow k}$ is decayed by $\gamma$ as time $k$ is further away from time $t$, while it is emphasized as time $k$ is near to time $t$ where the ratio $\gamma$ is interpreted as a \textit{temporal decay ratio}.
This explanation is consistent with the temporal locality, i.e., the tendency that recent edges are more influential than older ones.
Combined with the spatial diffusion $\Lrwr{t}$, the result of $\X{t}$ augments both spatial and temporal localities in $\G{t}$.

\textbf{Discussion.}
\method is a generalized version of GDC with PPR kernel to dynamic graphs since \method with $\beta=0$ spatially augments data within a single $\G{t}$ at each time, which is exactly what GDC does.
However, GDC does not consider temporal information for its augmentation, and it performs worse than \method as shown in Tables~\ref{tab:link_auc} and ~\ref{tab:node_f1}.

\subsection{Algorithm for \method}
Most graph diffusions involve heavy computational cost, especially for a large graph, and result in a dense matrix.
The computation of $\X{t}$ also exhibits the same issue, and thus we adopt approximate techniques to alleviate the problem.
Including the approximate strategies, the procedure of \method is summarized in Algorithm~\ref{alg:method} where $\Xtilde{0}$ is set to $\I{n}$.

\textbf{Power iteration.} The main bottleneck for obtaining $\X{t}$ is to compute the matrix inversion $\matL{t}^{-1}$ of $\Lrwr{t}$ in Equation~\eqref{eq:trwr_diff_rec}, which requires $O(n^3)$ time.
Instead of directly calculating the inversion, we use power iteration (lines~$9\!\sim\!15$) based on the following~\cite{yoon2018tpa}:
\begin{align*}
    \matL{t}^{-1}
    =
    \sum_{k=0}^{\infty} c^{k} \left(\Atilde{t}^{\top}\right)^{k}
    \approxeq
    \sum_{k=0}^{K} c^{k} \left(\Atilde{t}^{\top}\right)^{k}
\end{align*}
where $c = 1 - \alpha - \beta$, and $K$ is the number of iterations.
Let $\mat{M}_{t}^{(k)}$ be the result after $k$ iterations; then, it is recursively represented as follows:
\begin{align*}
    \mat{M}_{t}^{(k)} = \I{n} + c\Atilde{t}^{\top}\mat{M}_{t}^{(k-1)}
\end{align*}
where $\mat{M}_{t}^{(0)} = \I{n}$ and $\mat{M}_{t}^{(K)} \approxeq \matL{t}^{-1}$.
Note $\Atilde{t}$ is a normalized adjacency matrix (line 1) in which self-loops are added, as traditional GNNs usually do.
The approximate error is bounded by $c^{k}$, and converges to $0$ as $k \rightarrow \infty$ (see Lemma~\ref{lemma:error}).
After that, we set $\Lrwr{t}\!\leftarrow\!(1-c)\mat{M}_{t}^{(K)}$ (line~13).

At a glance, each iteration seems to take $O(n^{3})$ time for the matrix multiplication, but it is much faster than that since each snapshot $\G{t}$ is sparse in most cases.
More specifically, only a few nodes form edges at each time step in real graphs.
We call such nodes \textit{activated} where $\V{t}$ is the set of activated nodes at time $t$, and $n_t = |\V{t}|$.
In each $\G{t}$, a surfer can move only between activated nodes, i.e., only pairs of nodes in $\V{t}$ are diffused.
As seen in Table~\ref{tab:data}, the average $\bar{n}_{t}$ of $n_t$ over time is smaller than $n$ except the Brain dataset.

This allows us to do the power iteration on the sub-matrix $\Atilde{\V{t}} \in \mathbb{R}^{n_t \times n_t}$ of $\Atilde{t}$ for nodes in $\V{t}$ where $m_t$ is the number of non-zeros of $\Atilde{\V{t}}$.
Then, an iteration takes $O(m_tn_t)$ time for a sparse matrix multiplication.
Note $m_t$ is linearly proportional to $n_t$ in real graphs, i.e., $m_t = C_tn_t$ where $C_t$ is a constant.
Let $\bar{m}_t$ be the average number of edges over time.
As seen in Table~\ref{tab:data}, $\bar{C}_{t} = \bar{m}_t/\bar{n}_t$ is smaller than $\bar{n}_t$.
Thus, each iteration takes $O(n_t^2)$ time in average;
overall, it takes $O(n_t^2K + nK)$ time and $O(n_t^2 + n)$ space for $\Lrwr{t}$ (as only $n_t$ nodes are diffused).
More details are provided in Appendix~\ref{sec:appendix:proof}.

\textbf{Sparsification.}
Another bottleneck is that $\X{t}$ is likely to be dense by repeatedly multiplying $\matS{t}\X{t-1}$ (line 4) as time $t$ increases where $\matS{t}=\Lrwr{t}$.
This could be problematic in terms of space as well as running time, especially for graph convolutions since $\X{t}$ is used as an adjacency matrix.
To alleviate this issue, we adopt a sparsification technique suggested in~\cite{klicpera2019diffusion}.
As established in Theorem~\ref{theorem:trwr_diff_closed}, the graph structure of $\X{t}$ is spatially and temporally localized, which allows us to drop small entries of $\X{t}$, resulting in the sparse $\Xtilde{t}$.
For this, we use a filtering threshold $\epsilon$ to set values of $\X{t}$ below $\epsilon$ to zero (line 6).
This strategy has two advantages.
First, it keeps $\Xtilde{t}$ sparse at each time.
Second, it reduces the cost for processing $\matS{t}\Xtilde{t-1}$ as $\matS{t}$ and $\Xtilde{t-1}$ are sparse.
After the sparsification, we normalize $\Xtilde{t-1}$ (line 7) column-wise.
As shown in Figure~\ref{fig:effect_eps}, this sparsification makes the augmentation process fast and lightweight with tiny errors while it does not harm predictive accuracy too much, or can even improve.

\begin{theorem}[Complexity Analysis] \label{theorem:comp}
For each time step $t$, \method takes $O(n_tn/\epsilon + n_t^2K)$ time on average, and produces $\Xtilde{t}$ consuming $O(n/\epsilon)$ space  where $n$ is the number of total nodes, $n_t$ is the number of activated nodes at time $t$, $K$ is the number of iterations, and $\epsilon$ is a filtering threshold.
\end{theorem}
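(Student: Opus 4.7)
My plan is to walk through Algorithm~\ref{alg:method} line by line, using two structural invariants to make the bookkeeping clean. First, $\matS{t} = \Lrwr{t}$ has its non-trivial mass concentrated on the $n_t \times n_t$ block of activated nodes, since the time-aware surfer can move only along edges of $\G{t}$ (apart from identity self-loops on the $n - n_t$ inactive nodes). Second, by Lemma~\ref{lemma:transition} every $\X{t}$ is column stochastic, so after the threshold-$\epsilon$ sparsification on line~6 each column of $\Xtilde{t}$ can retain at most $\lceil 1/\epsilon \rceil$ entries exceeding $\epsilon$; consequently $\Xtilde{t}$ has $O(n/\epsilon)$ non-zeros in total, which immediately gives the $O(n/\epsilon)$ space bound and, inductively, the column-sparsity assumption needed at the next time step.

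With these invariants in hand I would bound the running time stage by stage. Line~1 (row normalization of $\A{t}$) is linear in the non-zeros of $\A{t}$, hence $O(n_t)$ on average. The power-iteration loop on lines~$9\!\sim\!15$ costs $O(n_t^{2} K + nK)$ as already argued in the text after the iteration display, using that each sparse multiplication against the activated sub-matrix $\Atilde{\V{t}}$ touches $O(n_t^{2})$ entries on average and the $\I{n}$ update contributes $O(n)$. For the temporal augmenter $\T{t} = \matS{t}\Xtilde{t-1}$ on line~4, I would compute it column by column: the $j$-th column of $\Xtilde{t-1}$ has at most $1/\epsilon$ non-zeros by the sparsification invariant, so producing the $j$-th column of $\T{t}$ amounts to summing $O(1/\epsilon)$ scaled columns of $\matS{t}$, each of size $O(n_t)$, for a per-column cost of $O(n_t/\epsilon)$; summing over the $n$ columns yields $O(n_t n/\epsilon)$. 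Lines~5--7 (convex combination, thresholding, column normalization) operate on matrices with $O(n_t n)$ non-zeros and thus cost $O(n_t n + n/\epsilon)$. Absorbing the subdominant $nK$, $n_t$, and $n_t n$ terms into the two leading ones yields the announced $O(n_t n/\epsilon + n_t^{2} K)$ per time step.

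The main obstacle I expect is establishing the two structural invariants rigorously rather than the arithmetic. For the block structure of $\matS{t}$, I would argue that the power iteration applied to $\Atilde{t}$ propagates non-zero mass only among nodes reachable from activated nodes via edges of $\G{t}$; since an inactive node carries only its self-loop, $\mat{M}_{t}^{(K)}$ differs from $\I{n}$ only on the activated block, and so each column of $\matS{t}$ has at most $n_t$ non-zeros outside the identity part. For the $1/\epsilon$ per-column bound, I would combine Lemma~\ref{lemma:transition} with a straightforward induction on $t$ (base case $\Xtilde{0} = \I{n}$, inductive step re-using that $\matS{t}\Xtilde{t-1}$ remains column stochastic before thresholding) to confirm that no column can contain more than $\lfloor 1/\epsilon \rfloor$ entries strictly exceeding $\epsilon$. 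Once these two facts are in place, the stated time and space bounds follow directly from the per-line accounting above.
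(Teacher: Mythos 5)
Your proposal is correct and follows essentially the same route as the paper's proof: a line-by-line accounting of Algorithm~\ref{alg:method} that combines the block structure of $\Lrwr{t}$ restricted to the activated nodes (giving the $O(n_t^2K + nK)$ power-iteration cost) with the column-stochasticity argument showing each column of $\Xtilde{t}$ retains at most $1/\epsilon$ entries after thresholding (the paper's Lemma~\ref{lemma:nnz}, proved there by contradiction rather than directly). The only cosmetic difference is that you bound the cost of $\matS{t}\Xtilde{t-1}$ column by column, whereas the paper invokes a generic sparse-matrix-multiplication lemma on the transposed product; both yield the same $O(n_tn/\epsilon)$ term.
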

\begin{proof}
The proof is provided in Appendix~\ref{sec:appendix:proof}.
\end{proof}

\begin{algorithm}[t!]
	\small
    \begin{algorithmic}[1]
    	\captionsetup{font=small}
        \caption{\method at time $t$}
        \label{alg:method}
        \Require adjacency matrix $\A{t}$, previous time-aware diffusion matrix $\Xtilde{t-1}$, restart probability $\alpha$, time travel probability $\beta$, number $K$ of iterations, filtering threshold $\epsilon$
        \Ensure time-aware diffusion matrix $\Xtilde{t}^{\top}$
            \State $\Atilde{t} \leftarrow \mat{D}_{t}^{-1}\A{t}$ where $\mat{D}_{t} = \text{diag}$($\A{t}\vect{1}$)
            \State $\Lrwr{t} \leftarrow $ \textsc{Power-Iteration}($\Atilde{t}$, $\alpha$, $\beta$, $K$)
            \State $\matS{t} \leftarrow \Lrwr{t}$  \algorithmiccomment{\textbf{\scriptsize Spatial augmenter}}
            \State $\T{t} \leftarrow \matS{t}\Xtilde{t-1}$ \algorithmiccomment{\textbf{\scriptsize Temporal augmenter}}
            \State  $\X{t} \leftarrow (1 - \gamma)\matS{t} + \gamma\T{t}$ where $\gamma = \beta/(\alpha + \beta)$
            \State $\Xtilde{t} \leftarrow$ filter entries of $\X{t}$ if their weights are $< \epsilon$
            \State normalize $\Xtilde{t}$ column-wise
        \State
        \Return $\Xtilde{t}^{\top}$
        \Function{Power-Iteration}{$\Atilde{t}$, $\alpha$, $\beta$, $K$}\label{alg:method:power:start}
            \State set $c \leftarrow 1 - \alpha - \beta$ and $\mat{M}_{t}^{(0)} \leftarrow \I{n}$
            \For{\text{$k \leftarrow 1$ to $K$}}
                \State $\mat{M}_{t}^{(k)} \leftarrow \I{n} + c\Atilde{t}^{\top}\mat{M}_{t}^{(k-1)}$
            \EndFor
            \State $\Lrwr{t} \leftarrow (1-c)\mat{M}_{t}^{(K)}$ where $\mat{M}_{t}^{(K)}
            \approxeq \matL{t}^{-1}$
            \State normalize $\Lrwr{t}$ column-wise and \Return $\Lrwr{t}$
        \EndFunction\label{alg:method:power:end}
    \end{algorithmic}
\end{algorithm}

\def\arraystretch{1}
\setlength{\tabcolsep}{4.8pt}
\begin{table}[t!]
    \captionsetup{font=small}
    \caption{
        Summary of datasets. $n$ and $m$ are the total numbers of nodes and edges, resp.
        $T$ and $L$ are the numbers of time steps and labels, resp.
        $\bar{n}_t$ and $\bar{m}_t$ are the average numbers of activated nodes and edges over time, resp.
        $\bar{C}_{t} = \bar{m}_{t}/\bar{n}_t$.
        The first 3 data are used for link prediction, and the others are for node classification.
    }
    \label{tab:data}
    \small
    \begin{tabular}{lrrrrrr}
        \hline
        \toprule
        \textbf{Datasets} & \multicolumn{1}{r}{$n$} & \multicolumn{1}{r}{$m$} & \multicolumn{1}{r}{$T$} & \multicolumn{1}{r}{$L$} & \multicolumn{1}{r}{$\lfloor\bar{n}_{t}\rfloor$} & \multicolumn{1}{r}{$\bar{C}_{t}$} \\
        \midrule
        \textbf{BitcoinAlpha} & 3,783 & 31,748 & 138 & 2 & 105 & 2.2 \\
        \textbf{WikiElec} & 7,125 & 212,854 & 100 & 2 & 354 & 6.0 \\
        \textbf{RedditBody} & 35,776 & 484,460 & 88 & 2 & 2,465 & 2.2 \\
        \midrule
        \textbf{Brain} & 5,000 & 1,955,488 & 12 & 10 & 5,000 & 32.6 \\
        \textbf{DBLP-3} & 4,257 & 23,540 & 10 & 3 & 782 & 3.0 \\
        \textbf{DBLP-5} & 6,606 & 42,815 & 10 & 5 & 1,212 & 3.5 \\
        \textbf{Reddit} & 8,291 & 264,050 & 10 & 4 & 2,071 & 12.8 \\
        \bottomrule
        \hline
    \end{tabular}
\end{table}

\textbf{Discussion.}
Theorem~\ref{theorem:comp} implies that \method is faster than $O(n^3)$, and uses space less than $O(n^2)$ for storing $\Xtilde{t}$ in most real dynamic graphs.
Nevertheless, its time complexity can reach $O(n^2)$ for a graph such as the Brain dataset; thus, for larger graphs, its scalability can be limited.
However, \method is based on matrix operations which are easy-to-accelerate using GPUs, and other diffusion methods such as GDC lie at the same complexity.
Furthermore, there are extensive works of efficient RWR computations~\cite{andersen2006local,DBLP:conf/sigmod/JungPSK17,DBLP:conf/sigmod/ShinJSK15,wang2017fora,hou2021massively} and accelerated multiplications of sparse matrices~\cite{srivastava2020matraptor}, which can make \method scalable.
In this work, we focus on effectively augmenting a dynamic graph, and leave further computational optimization on the augmentation as future work.

\section{Experiment}
In this section, we evaluate \method to show its effectiveness for the augmentation problem for dynamic graphs.

\subsection{Experimental Setting}

\def\arraystretch{0.9}
\setlength{\tabcolsep}{2.4pt}
\begin{table*}[t!]
\begin{threeparttable}[t]
    \caption{
        \label{tab:link_auc}
        Temporal link prediction accuracy (AUC) where \textsc{None} is a result without augmentation, and {\color{red}$\blacktriangle$} (or {\color{blue}$\blacktriangledown$}) indicates improvement (or degradation) compared to None.
        \method shows consistent improvement across most models and datasets.
    }
    \begin{tabular}{@{}cccccccccc@{}}
        \hline
        \toprule
        \multirow{2}[2]{*}{\textbf{AUC}} & \multicolumn{3}{c}{\textbf{BitcoinAlpha}} & \multicolumn{3}{c}{\textbf{WikiElec}} & \multicolumn{3}{c}{\textbf{RedditBody}} \\
        \cmidrule(lr){2-4} \cmidrule(lr){5-7} \cmidrule(lr){8-10}
        & \textbf{GCN} & \textbf{GCRN} & \textbf{EGCN} & \textbf{GCN} & \textbf{GCRN} & \textbf{EGCN} & \textbf{GCN} & \textbf{GCRN} & \textbf{EGCN} \\
        \midrule
        \textsc{None} & \lwt57.3±1.6\lwt & \lwt80.3±6.0\lwt & \lwt58.8±1.1\lwt & \lwt59.9±0.9\lwt & \lwt72.1±2.4\lwt & \lwt66.9±3.7\lwt & \lwt77.6±0.4\lwt & \lwt88.9±0.3\lwt & \lwt77.6±0.2\lwt \\
        \midrule
        \textsc{DropEdge} & \lbt56.3±1.0\lwt & \lbt73.9±2.2\lwt & \lbt57.4±0.9\lwt & \lbt50.1±1.0\lwt & \lbt56.0±9.3\lwt & \lbt47.9±6.4\lwt & \lbt73.0±0.4\lwt & \lbt77.0±1.7\lwt & \lbt71.9±0.7\lwt \\
        \textsc{GDC} & \lrt57.5±1.6\lwt & \lbt77.3±6.5\lwt & \lbt57.4±1.2\lwt & \lrt62.8±0.8\lwt & \lbt67.9±1.0\lwt & \lbt63.1±0.7\lwt & \lbt74.6±0.0\lwt & \lbt86.4±0.3\lwt & \lbt73.8±0.3\lwt \\
        \textsc{Merge} & \lrt66.8±2.6\lwt & \lrt93.1±0.4\lwt & \lrt61.0±9.2\lwt & \lrt60.6±1.7\lwt & \lbt68.4±3.2\lwt & \lbt60.7±1.3\lwt & \lbt69.7±0.7\lwt & \lrt{89.8±0.5}\lwt & \lrt80.3±0.5\lwt \\
        \midrule
        \method & \lrt\textbf{76.0±1.3}\lwt & \lrt\textbf{94.6±0.8}\lwt & \lrt\textbf{77.2±1.4}\lwt & \lrt\textbf{69.0±1.2}\lwt & \lrt\textbf{73.4±2.2}\lwt & \lrt\textbf{69.1±0.3}\lwt & \lrt\textbf{80.8±0.6}\lwt & \lrt\textbf{90.2±0.4}\lwt & \lrt\textbf{82.0±0.1}\lwt \\
        \bottomrule
        \hline
    \end{tabular}
\end{threeparttable}
\end{table*}

\textbf{Datasets.}
Table~\ref{tab:data} summarizes $7$ public datasets used in this work.
BitcoinAlpha is a social network between bitcoin users~\cite{kumar2016edge,kumar2018rev2}.
WikiElec is a voting network for Wikipedia adminship elections~\cite{leskovec2010predicting}.
RedditBody is a hyperlink network of connections between two subreddits~\cite{kumar2018community}.
For node classification, we use the following datasets evaluated in~\cite{xu2019spatio}.
Brain is a network of brain tissues where edges indicate their connectivities.
DBLP-3 and DBLP-5 are co-authorship networks extracted from DBLP.
Reddit is a post network where two posts were connected if they contain similar keywords.

\textbf{Baseline augmentation methods.}
We compare \method to the following baselines.
\textsc{None} indicates the result of a model without any augmentation.
\textsc{DropEdge} is a drop-based method randomly removing edges at each epoch.
\textsc{GDC} is a graph diffusion-based method where we use PPR for this as our approach is based on random walks.
\textsc{Merge} is a simple baseline merging adjacency matrices from time $1$ to $t$ when training a model at time $t$.
We apply \textsc{DropEdge} and \textsc{GDC} to each snapshot since they are designed for a static graph.

\textbf{Baseline GNNs.}
We use GCN~\cite{kipf2017semi}, GCRN~\cite{seo2018structured} and EvolveGCN~\cite{ParejaDCMSKKSL20}, abbreviated to EGCN, for performing dynamic graph tasks.
We naively apply a static GCN to each graph snapshot for verifying how temporal information is informative.
We choose GCRN and EvolveGCN, lightweight and popular dynamic GNN models showing decent performance, to observe practical gains from augmentation.
We adopt GCN layers for GCRN's graph convolution.
We use the implementation of \cite{rozemberczki2021pytorch} for GCRN and EGCN.
Note that any GNN models following Problem~\ref{prob:aug} can utilize \method because our approach is model-agnostic.

\textbf{Training details.}
For each dataset, we tune the hyperparameters of all models on the original graph (marked as \textsc{None}) and augmented graphs separately through a combination of grid and random search on a validation set, and report test accuracy at the best validation epoch.
For \method, we fix $K$ to $100$, search for $\epsilon$ in $[0.0001, 0.01]$, and tune $\alpha$ and $\beta$ in $(0, 1)$ s.t. $0\!<\!\alpha + \beta\!<\!1$.
We use the Adam optimizer with weight decay $10^{-4}$, and the learning rate is tuned in $[0.01, 0.05]$ with decay factor $0.999$.
The dropout ratio is searched in $[0, 0.5]$.
We repeat each experiment $5$ times with different random seeds, and report the average and standard deviation of test values.
We use PyTorch and DGL~\cite{wang2019dgl} to implement all methods.
All experiments were done at workstations with Intel Xeon 4215R and RTX 3090.
Details about the experimental setting are provided in Appendix~\ref{sec:appendix:setting}.

\subsection{Temporal Link Prediction Task}
This aims to predict whether an edge exists or not at time $t+1$ using the information up to time $t$.
As a standard setting~\cite{ParejaDCMSKKSL20}, we follow a chronological split with ratios of training (70\%), validation (10\%), and test (20\%) sets.
We sample the same amount of negative samples (edges) to positive samples (edges) for each time, and use AUC as a representative measure.
We set the number of epochs to $200$ with early stopping of patience $50$.

As shown in Table~\ref{tab:link_auc}, \method consistently improves the performance of dynamic GNN models such as GCRN and EGCN compared to \textsc{None} (i.e., without augmentation) while static augmentations of \textsc{DropEdge} and \textsc{GDC} do not.
\method also outperforms the static methods on all models and datasets.
This indicates it is not beneficial to only spatially augment the graphs for this task.
\method even improves static GCN, which is competitive with EGCN, implying that effectively and temporally augmented data can even make static GNNs learn dynamic graphs well.
In addition, \textsc{Merge} also improves the accuracy of the tested models on many datasets.
This confirms the need to utilize temporal information when it comes to dynamic graph augmentation in this task.
However, \textsc{Merge} performs worse than \method in most cases because \method can effectively augment both spatial and temporal localities at once while \textsc{Merge} does not have a mechanism to enhance such localities.

\def\arraystretch{1.1}
\setlength{\tabcolsep}{0.8pt}
\begin{table*}[t!]
\small
\begin{threeparttable}[t]
    \caption{
        \label{tab:node_f1}
        Node classification accuracy (Macro F1-score) where \textsc{None} is a result without augmentation, and {\color{red}$\blacktriangle$} (or {\color{blue}$\blacktriangledown$}) indicates improvement (or degradation) compared to None.
        \method shows consistent improvement across most models and datasets.
    }
    \begin{tabular}{@{}ccccccccccccc@{}}
        \hline
        \toprule
        \multirow{2}[2]{*}{\textbf{Macro F1}} & \multicolumn{3}{c}{\wt\textbf{Brain}} & \multicolumn{3}{c}{\wt\textbf{Reddit}} & \multicolumn{3}{c}{\wt\textbf{DBLP-3}} & \multicolumn{3}{c}{\wt\textbf{DBLP-5}} \\
        \cmidrule(lr){2-4} \cmidrule(lr){5-7} \cmidrule(lr){8-10} \cmidrule(lr){11-13}
        & \wt\textbf{GCN} & \wt\textbf{GCRN} & \wt\textbf{EGCN} & \wt\textbf{GCN} & \wt\textbf{GCRN} & \wt\textbf{EGCN} & \wt\textbf{GCN} & \wt\textbf{GCRN} & \wt\textbf{EGCN} & \wt\textbf{GCN} & \wt\textbf{GCRN} & \wt\textbf{EGCN} \\
        \midrule
        \textsc{None} & \wt44.7±0.8 & \wt66.8±1.0 & \wt43.4±0.7 & \wt18.2±2.9 & \wt40.4±1.6 & \wt18.6±2.3 & \wt53.4±2.6 & \wt83.1±0.6 & \wt51.3±2.7 & \wt69.6±0.9 & \wt75.4±0.7 & \wt68.5±0.6 \\
        \midrule
        \textsc{DropEdge} & \bt35.2±1.7 & \rt67.8±0.6 & \bt39.7±1.8 & \rt\textbf{19.4±0.8} & \bt40.3±1.4 & \bt{18.0±2.7} & \rt55.8±1.9 & \rt84.3±0.6 & \rt52.4±1.7 & \rt70.5±0.5 & \rt75.6±0.7 & \bt68.0±0.7 \\
        \textsc{GDC} & \rt63.2±1.2 & \rt88.0±1.5 & \rt67.3±1.3 & \bt17.5±2.3 & \rt41.0±1.6 & \bt18.5±2.8 & \rt53.4±2.1 & \rt84.7±0.5 & \rt52.8±2.2 & \rt70.0±0.7 & \rt75.5±1.2 & \rt69.1±1.0 \\
        \textsc{Merge} & \bt34.4±3.4 & \bt63.2±1.6 & \rt53.0±0.9 & \rt19.3±3.0 & \bt39.6±0.8 & \rt 20.4±3.0 & \rt54.9±3.1 & \bt83.0±1.4 & \rt53.3±1.2 & \rt{70.8±0.4} & \bt74.5±0.8 & \rt69.7±1.6 \\
        \midrule
        \textbf{\method} & \rt\textbf{68.7±1.2} & \rt\textbf{91.3±1.0} & \rt\textbf{72.0±0.6} & \rt{18.4±3.0} & \rt\textbf{41.5±1.5} & \rt\textbf{21.9±1.6} & \rt\textbf{57.5±2.2} & \rt\textbf{84.9±1.6} & \rt\textbf{56.4±1.8} & \rt{\textbf{71.1±0.6}} & \rt\textbf{77.9±0.4} & \rt\textbf{70.1±1.0} \\
        \bottomrule
        \hline
    \end{tabular}
\end{threeparttable}
\end{table*}

\subsection{Node Classification Task}
This is to classify a label of each node where a graph and features change over time.
Following~\cite{xu2019spatio}, we split all nodes into training, validation, and test sets by the {7:1:2} ratio.
{We feed node embeddings $\matH{T}$ of each model forward to a softmax classifier}, and use Macro F1-score because labels are imbalanced in each dataset.
We set the number of epochs to $1,000$ with early stopping of patience $100$.

Table~\ref{tab:node_f1} shows \method consistently improves the accuracies of GNNs on most datasets.
Especially, \method significantly enhances the accuracies on the Brain dataset as another diffusion method GDC does, but \method shows better accuracy than GDC, implying it is effective to augment a temporal locality for the performance.
For the other datasets, \method slightly improves each model, but it overall performs better than other augmentations.
Note GCN and EGCN are worse than a random classifier of $1/L$ score (0.25 for $L\!=\!4$) in the Reddit where $L$ is the number of labels, and all tested augmentations fail to beat the score, implying even these augmentations could not boost a poor model in this task.

\subsection{Effect of Hyperparameters}
We analyze the effects of temporal decay ratio $\gamma$ and filtering threshold $\epsilon$ that mainly affect \method's results.
We fix the number $K$ of iterations to $100$ for the power iteration, which leads to sufficiently accurate results for $\Lrwr{t}$.

\begin{figure}[t]
    \subfigure{
        \label{fig:alpha:effect_gamma}
        \includegraphics[width=1.0\linewidth]{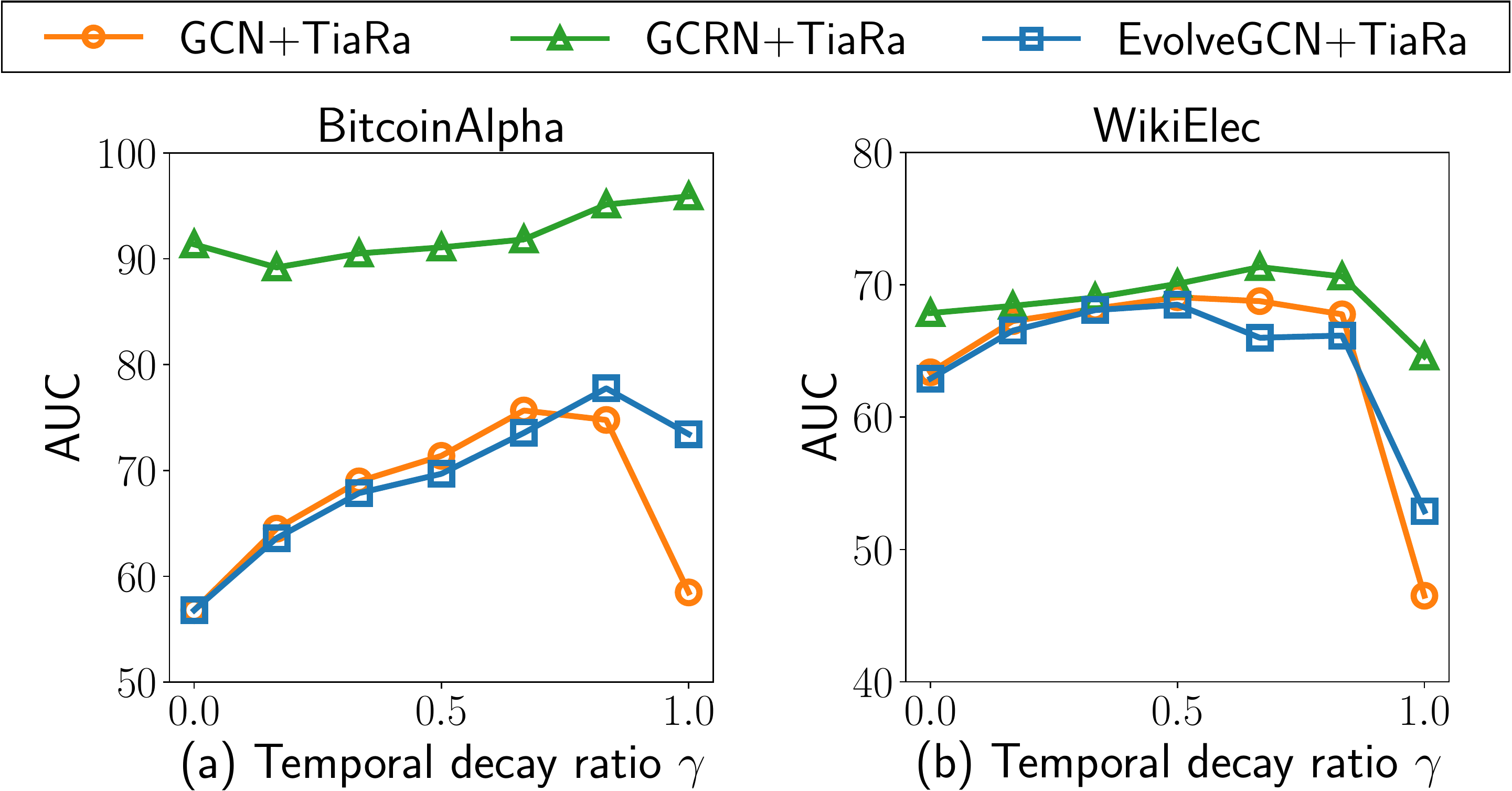}
    }
    \caption{
        \label{fig:effect_gamma}
        {
        Effect of the temporal decay ratio $\gamma$.
        }
    }
\end{figure}

\textbf{Effect of the temporal decay ratio $\gamma$.}
As \method's hyperparameters, $\alpha$ and $\beta$ should be analyzed, but our preliminary experiments showed that patterns vary by models and datasets in the changes of $\alpha$ and $\beta$.
Instead, we narrow our focus to $\gamma$ in Equation~\eqref{eq:trwr_diff_rec} where $\gamma = \beta/(\alpha + \beta)$.
For this experiment, we vary $\gamma$ from $10^{-5}$ to $1\!-\!10^{-5}$ by tweaking $\alpha$ and $\beta$ s.t. $\alpha + \beta$ is fixed to $0.6$.
Figure~\ref{fig:effect_gamma} shows that too small or large values of $\gamma$ can degrade link prediction accuracy except GCRN with \method in BitcoinAlpha.
This implies that it is important to properly mix spatial and temporal information about the performance, which is controlled by $\gamma$.

\begin{figure}[t]
    \subfigure{
        \label{fig:alpha:effect_eps}
        \includegraphics[width=1.0\linewidth]{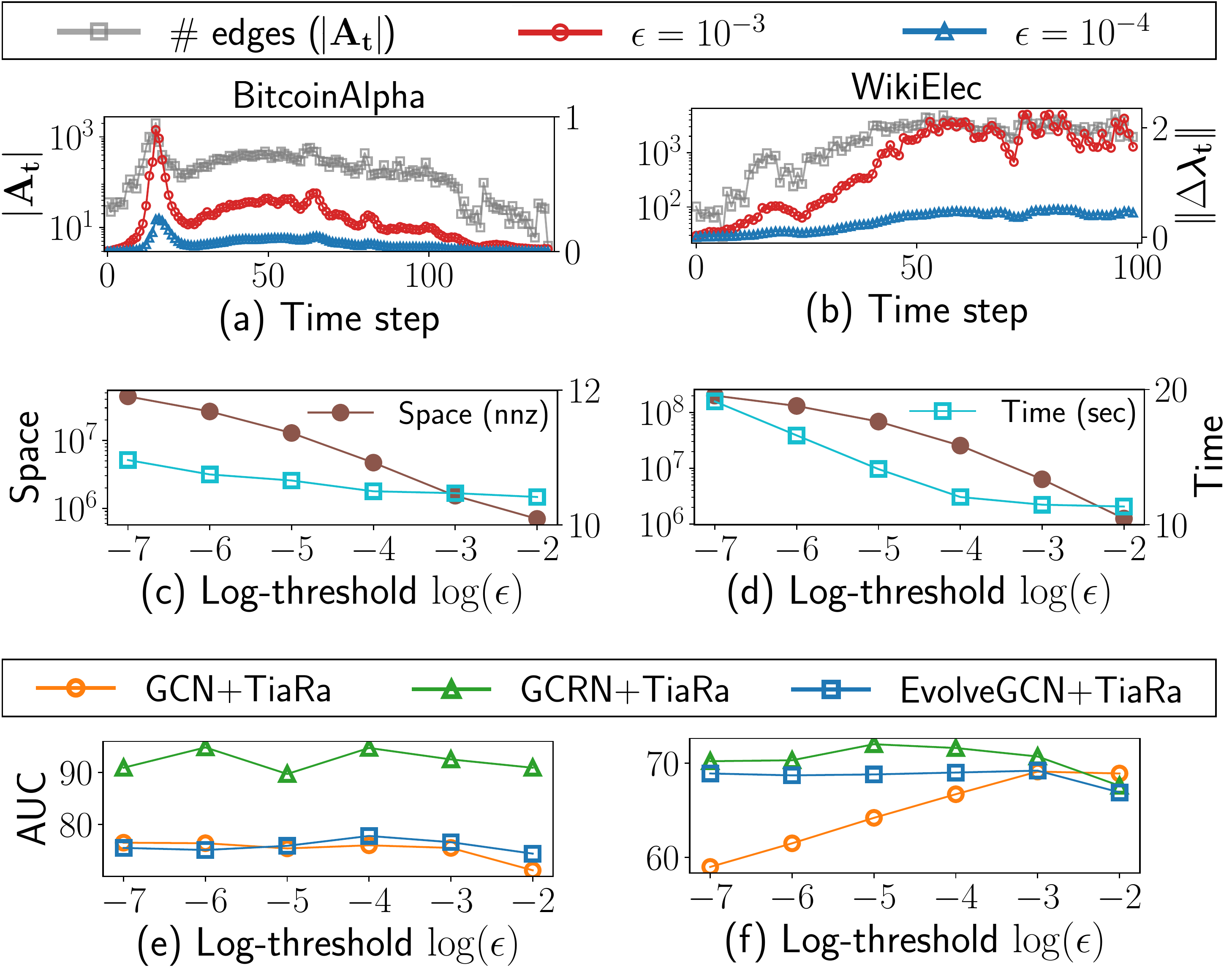}
    }
    \caption{
        \label{fig:effect_eps}
        Effect of the filtering threshold $\epsilon$.
    }
\end{figure}

\textbf{Effect of the filtering threshold $\epsilon$.}
Figure~\ref{fig:effect_eps} shows the effects of $\epsilon$ in terms of approximate error, time, space, and accuracy of link prediction in BitcoinAlpha and WikiElec.
We fix $\alpha$ and $\beta$ to 0.25, and vary $\epsilon$ from $10^{-7}$ to $10^{-2}$ for this experiment.

We measure the approximate error $\lVert\Delta\lmd{t}\rVert\!=\!\lVert \lmd{t}\!-\!\tilde{\lmd{t}}\rVert_{2}$ of eigenvalues where $\lmd{t}$ and $\tilde{\lmd{t}}$ are vectors of eigenvalues of $\X{t}$ (i.e., $\epsilon\!=\!0$) and $\Xtilde{t}$, respectively, as similarly analyzed in~\cite{klicpera2019diffusion}.
The right y-axis of Figures~\ref{fig:effect_eps}(a) and (b) is the error, and the left y-axis is the number $|\A{t}|$ of edges in $\A{t}$.
As time $t$ increases, the errors (red and blue lines) remain small, and do not explode, implying errors incurred by repeated sparsifications are not excessively accumulated over time.
Rather, the errors tend to be proportional to $|\A{t}|$ at each time.

Figures~\ref{fig:effect_eps} (c) and (d) show the space measured by $\sum_{t}|\Xtilde{t}|$ (left y-axis) and the augmentation time (right y-axis) of \method by $\epsilon$ between $10^{-7}$ and $10^{-2}$.
As the strength of sparsification increases (i.e., $\epsilon$ becomes larger), the produced non-zeros and the augmentation time decrease.
On the other hand, most of the accuracies remain similar except  $\epsilon\!=\!10^{-2}$ as shown in Figures~\ref{fig:effect_eps}(e) and (f).
Note it is not effective to truncate too many entries (e.g., $\epsilon\!=\!10^{-2}$), or too dense $\Xtilde{t}$ can worse the performance as GCN in WikiElec.
Thus, the sparsification with proper $\epsilon$ such as $10^{-3}$ or $10^{-4}$ provides a good trade-off between error, time, space, and accuracy.

\section{Conclusion}
In this work, we propose \method, a novel and model-agnostic diffusion method for augmenting a dynamic graph with the purpose of improvements in dynamic GNN models.
We first extend Random Walk with Restart (RWR) to Time-aware RWR so that it produces spatially and temporally localized scores.
We then formulate time-aware random walk diffusion matrices, and analyze how our diffusion approach augments both spatial and temporal localities in the dynamic graph.
As graph diffusions lead to dense matrices, we further employ approximate techniques such as power iteration and sparsification, and analyze how they are effective for achieving a good trade-off between error, time, space, and predictive accuracy.
Our experiments on various real-world dynamic graphs show that \method aids GNN models in providing better performance of temporal link prediction and node classification tasks.

{
\bibliography{reference}
}

\newpage
\appendix
\setcounter{secnumdepth}{1}
\setlength{\extrarowheight}{0em}
\def\arraystretch{0.95}
\setlength{\tabcolsep}{2.7pt}
\begin{table}[!t]
    \vspace{-3mm}
	\centering
	\small
	\caption{Table of symbols.}
	\label{tab:symbols}
	\begin{tabular}{cl}
	    \hline
		\toprule
		\textbf{Symbol} & \textbf{Definition} \\
        \midrule
        $\DTDG = \{\G{1},\cdots,\G{T}\}$ & {\small discrete-time dynamic graph} \\
        $\G{t} = (\V{}, \E{t}, \F{t})$ & {\small graph snapshot at time $t$ of $\DTDG$} \\
        $\E{t}$ & {\small set of edges at time $t$} \\
        $\V{}$ & {\small set of nodes where $n = |\V{}|$} \\
        $\V{t} \subset \V{}$ & {\small set of activated nodes where $n_t = |\V{t}|$} \\
        $\F{t} \in \mathbb{R}^{n \times d}$ & {\small initial node feature matrix at time $t$} \\
         & {\small where $d$ is the number of features} \\
        {$\A{t} \in \mathbb{R}^{n \times n}$} & {\small adjacency matrix of $\G{t}$} \\
        {$\Atilde{t} \in \mathbb{R}^{n \times n}$} & {\small normalized adjacency matrix of $\G{t}$} \\
        {$\mathcal{F}_{\Theta}(\cdot)$} & {\small GNN for discrete-time dynamic graphs} \\
        {$\I{n} \in \mathbb{R}^{n \times n}$} & {\small identity matrix} \\
        {$\alpha$} & {\small restart probability} \\
        {$\beta$} & {\small time travel probability} \\
        {$K$} & {\small number of power iterations} \\
        {$\epsilon$} & {\small filtering threshold} \\
        {$\gamma$} & {\small temporal decay ratio $\gamma=\beta/(\alpha + \beta)$} \\
        {$\matL{t} \in \mathbb{R}^{n \times n}$} & {\small  $\L{t} = \I{n} - (1 - \alpha - \beta)\Atilde{t}^{\top}$} \\
        {$\Lrwr{t} \in \mathbb{R}^{n \times n}$} & {\small RWR diffusion kernel on the graph $\G{t}$ } \\
        {} & {where $\Lrwr{t} = (\alpha + \beta)\L{t}^{-1}$}\\
        {$\matS{t} \in \mathbb{R}^{n \times n}$} & {\small spatial diffusion matrix} \\
        {$\T{t} \in \mathbb{R}^{n \times n}$} & {\small temporal diffusion matrix} \\
        {$\X{t} \in \mathbb{R}^{n \times n}$} & {\small time-aware random walk diffusion matrix} \\
		\bottomrule
		\hline
	\end{tabular}
\end{table}

\section{Proofs}
\label{sec:appendix:proof}

\begin{lemma}
\label{lemma:transition}
For every time step $t$, $\X{t}$ is column stochastic.
\end{lemma}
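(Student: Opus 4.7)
The plan is to proceed by induction on $t$, leveraging the recurrence $\X{t} = (1-\gamma)\matS{t} + \gamma\T{t}$ from Equation~\eqref{eq:trwr_diff_rec} together with the observation that column stochasticity is preserved by (i) matrix multiplication of column-stochastic matrices and (ii) convex combinations. The base case $\X{0} = \I{n}$ is trivially column stochastic.

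For the inductive step, assuming $\X{t-1}$ is column stochastic, the key sub-claim is that the RWR kernel $\Lrwr{t} = (\alpha+\beta)\matL{t}^{-1}$ is itself column stochastic. First, I would establish non-negativity of $\Lrwr{t}$ by invoking the Neumann series
\begin{equation*}
    \matL{t}^{-1} = \sum_{k=0}^{\infty} (1-\alpha-\beta)^{k} \bigl(\Atilde{t}^{\top}\bigr)^{k},
\end{equation*}
which converges because $1-\alpha-\beta \in (0,1)$ and $\Atilde{t}^{\top}$ has spectral radius at most $1$ (as $\Atilde{t}$ is row-stochastic); every term in the series is entrywise non-negative. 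Second, I would verify that column sums equal one by checking $\vect{1}^{\top}\Lrwr{t} = \vect{1}^{\top}$. This reduces to showing $\vect{1}^{\top}\matL{t} = (\alpha+\beta)\vect{1}^{\top}$, which follows directly from $\vect{1}^{\top}\Atilde{t}^{\top} = \vect{1}^{\top}$ (row-stochasticity of $\Atilde{t}$) applied to $\matL{t} = \I{n} - (1-\alpha-\beta)\Atilde{t}^{\top}$.

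With column stochasticity of $\Lrwr{t}$ in hand, the rest is immediate: $\matS{t} = \Lrwr{t}\I{n} = \Lrwr{t}$ is column stochastic; $\T{t} = \Lrwr{t}\X{t-1}$ is column stochastic because the product of two column-stochastic matrices is again column stochastic (non-negativity is preserved, and $\vect{1}^{\top}\Lrwr{t}\X{t-1} = \vect{1}^{\top}\X{t-1} = \vect{1}^{\top}$ by the inductive hypothesis); and finally $\X{t} = (1-\gamma)\matS{t} + \gamma\T{t}$ with $\gamma \in (0,1)$ is a convex combination of column-stochastic matrices, hence column stochastic.

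The main obstacle is the column-stochasticity of $\Lrwr{t}$, which is the substantive computation; once this is set up, the inductive argument collapses to routine verifications of preservation under products and convex combinations. The rest of the steps are purely formal.
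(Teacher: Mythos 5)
Your proposal is correct and follows essentially the same route as the paper's proof: induction on $t$ with base case $\X{0} = \I{n}$ and an inductive step that reduces to $\vect{1}^{\top}\Lrwr{t} = \vect{1}^{\top}$ together with preservation of column stochasticity under products and convex combinations. The only difference is that you explicitly justify the column stochasticity of $\Lrwr{t}$ (non-negativity via the Neumann series and unit column sums via $\vect{1}^{\top}\matL{t} = (\alpha+\beta)\vect{1}^{\top}$), whereas the paper simply asserts $\vect{1}^{\top}\Lrwr{t} = \vect{1}^{\top}$ without proof.
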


\begin{proof}
As a base case, $\X{0} = \I{n}$ for $t=0$, which is trivially column stochastic.
Assume $\X{t-1}$ is column stochastic, i.e., $\vect{1}^{\top}\X{t-1} = \vect{1}^{\top}$
where $\vect{1}\in \mathbb{R}^{n}$ is a column vector of ones.
Then, $\vect{1}^{\top}\X{t}$ is written as follows:
\begin{align*}
    \begin{split}
        \vect{1}^{\top}\X{t}
        &= (1-\gamma) \vect{1}^{\top}\Lrwr{t}\I{n} + \gamma\vect{1}^{\top}\Lrwr{t}\X{t-1} \\
        &= (1-\gamma)\vect{1}^{\top}\I{n} + \gamma\vect{1}^{\top}\X{t-1} \\
        &= (1-\gamma)\vect{1}^{\top} + \gamma\vect{1}^{\top} = \vect{1}^{\top}
    \end{split}
\end{align*}
where $\vect{1}^{\top}\Lrwr{t} = \vect{1}^{\top}$.
Therefore, the claim holds for every $t \geq 0$ by mathematical induction.
\end{proof}

\noindent\textbf{Proof of Theorem~\ref{theorem:trwr_diff_closed}}. We prove Theorem~\ref{theorem:trwr_diff_closed} as follows:

\begin{proof}
We begin the derivation from the following equation:
\begin{equation}
    \X{t+1} = (1-\gamma)\left(\Lrwr{t+1}\I{n}\right) + \gamma\left(\Lrwr{t+1}\X{t}\right) \tag{\ref{eq:trwr_diff_rec}}
\end{equation}
As a base case, $\X{1}$ is ($t=0$ in the above) as follows:
\begin{equation*}
    \X{1} = (1-\gamma)\left(\Lrwr{1}\I{n}\right) + \gamma\left(\Lrwr{1}\X{0}\right) = \Lrwr{1}
\end{equation*}
where $\X{0}=\I{n}$. This trivially holds Equation~\eqref{eq:trwr_diff_closed}.
Let's assume that Equation~\eqref{eq:trwr_diff_closed} holds at $t$.
Then, by substituting $\X{t}$ of Equation~\eqref{eq:trwr_diff_closed} into Equation~\eqref{eq:trwr_diff_rec}, $\X{t+1}$ is:
\begin{align*}
    \begin{split}
        \X{t+1} &= (1\!-\!\gamma)\left(\Lrwr{t+1} + \sum_{i = 0}^{t-2} \gamma^{i+1} \Lrwr{t+1 \leftlsquigarrow t-i} \right)\!+\!\gamma^{t}\Lrwr{t+1\leftlsquigarrow1} \\
        &= (1-\gamma)\left(\sum_{i = 0}^{t-1} \gamma^{i} \Lrwr{t+1 \leftlsquigarrow t+1-i} \right)\!+\!\gamma^{t}\Lrwr{t+1\leftlsquigarrow1}
    \end{split}
\end{align*}

Suppose $k=t+1$; then, $\X{k}$ is represented as follows:
\begin{equation*}
    \X{k} = (1-\gamma)\left(\sum_{i = 0}^{k-2} \gamma^{i} \Lrwr{k \leftlsquigarrow k-i} \right)\!+\!\gamma^{k-1}\Lrwr{k\leftlsquigarrow1}
\end{equation*}

Note that the equation of $\X{k}$ has the same form of that of $\X{t}$ in Equation~\eqref{eq:trwr_diff_rec}.
This indicates Equation~\eqref{eq:trwr_diff_rec} also holds at $k=t+1$.
Thus, the claim holds for every $t \geq 1$ by mathematical induction.
\end{proof}

\begin{lemma}
\label{lemma:error}
Suppose $c = 1 - \alpha - \beta$ and $0 < c < 1$.
The approximate error of the following power iteration is bounded by $c^{k}$, and converges to $0$ as $k \rightarrow \infty$:
\begin{align*}
    \mat{M}_{t}^{(k)} = \I{n} + c\Atilde{t}^{\top}\mat{M}_{t}^{(k-1)}
\end{align*}
where $\Atilde{t}$ is column stochastic, and $\mat{M}_{t}^{(0)} = \I{n}$.
\end{lemma}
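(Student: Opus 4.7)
The plan is to unroll the recurrence into a truncated Neumann series, compare with the full Neumann expansion of $\matL{t}^{-1}$, and bound the remainder using the stochastic structure of $\Atilde{t}$.

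First I would establish the closed form of the iterates by induction on $k$. The base case is $\mat{M}_{t}^{(0)} = \I{n} = c^{0}(\Atilde{t}^{\top})^{0}$. Assuming $\mat{M}_{t}^{(k-1)} = \sum_{j=0}^{k-1} c^{j}(\Atilde{t}^{\top})^{j}$, plugging into the recurrence gives
\begin{align*}
\mat{M}_{t}^{(k)} = \I{n} + c\Atilde{t}^{\top}\sum_{j=0}^{k-1}c^{j}(\Atilde{t}^{\top})^{j} = \sum_{j=0}^{k} c^{j}(\Atilde{t}^{\top})^{j},
\end{align*}
so $\mat{M}_{t}^{(k)}$ is the $k$-th partial sum of the Neumann series.

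Next I would invoke the Neumann expansion for $\matL{t}^{-1}$. Since $\Atilde{t}$ is (row) stochastic, its transpose $\Atilde{t}^{\top}$ has $\|\Atilde{t}^{\top}\|_{1} = 1$, hence the spectral radius of $c\Atilde{t}^{\top}$ is at most $c<1$ and $\matL{t} = \I{n} - c\Atilde{t}^{\top}$ is invertible with $\matL{t}^{-1} = \sum_{j=0}^{\infty} c^{j}(\Atilde{t}^{\top})^{j}$. Subtracting the truncation yields the compact residual expression
\begin{align*}
\matL{t}^{-1} - \mat{M}_{t}^{(k)} = \sum_{j=k+1}^{\infty} c^{j}(\Atilde{t}^{\top})^{j} = c^{k+1}(\Atilde{t}^{\top})^{k+1}\matL{t}^{-1}.
\end{align*}

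Finally I would take the induced $1$-norm. Using $\|\Atilde{t}^{\top}\|_{1} = 1$ and $\|\matL{t}^{-1}\|_{1} \leq \sum_{j=0}^{\infty} c^{j} = 1/(1-c)$, the error satisfies $\|\matL{t}^{-1} - \mat{M}_{t}^{(k)}\|_{1} \leq c^{k+1}/(1-c) = O(c^{k})$, so it is bounded by a geometrically decaying quantity of order $c^{k}$ and tends to $0$ as $k\to\infty$. The only mildly delicate step is matching the claimed $c^{k}$ bound exactly: the cleanest route is to pick the induced $1$-norm (or equivalently $\infty$-norm on $\Atilde{t}$) so that stochasticity gives $\|\Atilde{t}^{\top}\|_{1}=1$ for free; using any general operator norm without this observation would leave an unavoidable $\|\Atilde{t}^{\top}\|^{k+1}$ factor that is harder to control.
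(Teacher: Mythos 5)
Your proof is correct and reaches exactly the same bound as the paper, $c^{k+1}/(1-c)$, but by a slightly different route. The paper never unrolls the iterates: it identifies the fixed point $\mat{M}_{t}^{*}=\matL{t}^{-1}$ of the recurrence and telescopes the error contraction $\lVert \mat{M}_{t}^{*}-\mat{M}_{t}^{(k)}\rVert_{1}\leq c\,\lVert\Atilde{t}^{\top}\rVert_{1}\,\lVert \mat{M}_{t}^{*}-\mat{M}_{t}^{(k-1)}\rVert_{1}$ down to $c^{k}\lVert \mat{M}_{t}^{*}-\I{n}\rVert_{1}$, then evaluates $\lVert \mat{M}_{t}^{*}-\I{n}\rVert_{1}=c/(1-c)$ from column sums. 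You instead prove by induction that $\mat{M}_{t}^{(k)}$ is the $k$-th partial sum of the Neumann series and write the residual exactly as $c^{k+1}(\Atilde{t}^{\top})^{k+1}\matL{t}^{-1}$ before taking norms. Both arguments hinge on the same two facts ($\lVert\Atilde{t}^{\top}\rVert_{1}=1$ and $\lVert\matL{t}^{-1}\rVert_{1}=1/(1-c)$); your version buys an exact closed form for the error, which is a bit more informative, while the paper's contraction argument avoids having to establish the closed form of the iterates at all. Two small remarks: like the paper, you interpret the stated bound ``$c^{k}$'' as $O(c^{k})$, since $c^{k+1}/(1-c)\leq c^{k}$ only when $c\leq 1/2$ --- this looseness is in the paper itself, not a defect of your argument; and your parenthetical correction that $\Atilde{t}$ is \emph{row} stochastic (so that $\Atilde{t}^{\top}$ is column stochastic and $\lVert\Atilde{t}^{\top}\rVert_{1}=1$) is the reading consistent with the main text's definition $\Atilde{t}=\mat{D}_{t}^{-1}\A{t}$, and is the same fact the paper's proof silently uses.
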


\begin{proof}
Let $\mat{M}_{t}^{*}$ be the stationary matrix of the equation. Then, the iteration is represented as $\mat{M}_{t}^{*} = \I{n} + c\Atilde{t}^{\top}\mat{M}_{t}^{*}$, implying $\mat{M}_{t}^{*}= \matL{t}^{-1}$.
Then, the error $\lVert \mat{M}_{t}^{*} -  \mat{M}_{t}^{(k)}\rVert_{1}$ is represented as follows:
\begin{align*}
    \lVert \mat{M}_{t}^{*} -  \mat{M}_{t}^{(k)}\rVert_{1}
    &= \lVert c\Atilde{t}^{\top}\mat{M}_{t}^{*} -  c\Atilde{t}^{\top}\mat{M}_{t}^{(k-1)}\rVert_{1} \\
    &\leq c \lVert\Atilde{t}^{\top}\rVert_{1} \lVert \mat{M}_{t}^{*} - \mat{M}_{t}^{(k-1)} \rVert_{1} \\
    &\cdots \\
    &\leq c^{k} \lVert \mat{M}_{t}^{*} - \mat{M}_{t}^{(0)} \rVert_{1} = c^{k} \lVert \mat{M}_{t}^{*} - \I{n} \rVert_{1} \\
    &\leq c^{k} \left(\frac{c}{1-c}\right)
\end{align*}
where $\lVert\cdot\rVert_{1}$ is L1 norm of a matrix.
Note $\lVert(1-c)\matL{t}^{-1}\rVert_{1} = 1$ since $(1-c)\matL{t}^{-1}$ is stochastic, and $\mat{M}_{t}^{*} = \matL{t}^{-1}$.

Hence, $\lVert\mat{M}_{t}^{*}\rVert_{1}\!=\!(1-c)^{-1}$.
Then, each column sum of $\mat{M}_{t}^{*} - \I{n}$ is $(1-c)^{-1}-1$; thus, $\lVert \mat{M}_{t}^{*} - \I{n} \rVert_{1} = c/(1\!-\!c)$.
Since $c/(1-c)$ is a constant, the error converges to $0$ as $k\!\rightarrow\!\infty$.
\end{proof}

\noindent\textbf{Proof of Theorem~\ref{theorem:comp}}. We prove Theorem~\ref{theorem:comp} as follows:

\begin{proof}
According to Lemma~\ref{lemma:nnz}, $\text{nnz}(\Xtilde{t}) = O(n/\epsilon)$.
Thus, the space complexity of $\Xtilde{t}$ is $O(n/\epsilon)$ if we use a sparse matrix format such as CSR or CSC.
The time complexity is proved in Lemma~\ref{lemma:time}.
\end{proof}

\begin{lemma}
\label{lemma:nnz}
The sparsification truncates small entries of $\X{t}$ if their weights are $<\epsilon$.
Then, $\Xtilde{t}$ has $n/\epsilon$ non-zeros at most, i.e., $\text{nnz}(\Xtilde{t}) = O(n/\epsilon)$ after the sparsification.
\end{lemma}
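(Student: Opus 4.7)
The plan is to exploit the column-stochastic structure of $\X{t}$ established in Lemma~\ref{lemma:transition}, combined with the fact that the sparsification keeps only entries of weight at least $\epsilon$. The bound on the number of non-zeros will then follow by a simple counting argument applied column by column.

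First I would recall from Lemma~\ref{lemma:transition} that for every $t \geq 0$, the diffusion matrix $\X{t}$ is column stochastic, meaning that $\vect{1}^{\top}\X{t} = \vect{1}^{\top}$. In particular, every column of $\X{t}$ consists of non-negative entries that sum to exactly $1$. Since the sparsification only removes entries (setting those strictly less than $\epsilon$ to zero), each column of $\Xtilde{t}$ before the final column-wise renormalization still has entries summing to at most $1$.

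Next I would argue column-wise. Fix any column index $s \in \{1, \ldots, n\}$, and let $k_s$ denote the number of non-zero entries in the $s$-th column of $\Xtilde{t}$ after sparsification. Since each surviving entry satisfies $\Xtilde{t}[u,s] \geq \epsilon$, the sum of the surviving entries in that column is at least $k_s \cdot \epsilon$. Combined with the fact that this sum is at most $1$ (by column stochasticity before sparsification), we get $k_s \cdot \epsilon \leq 1$, hence $k_s \leq 1/\epsilon$. Summing over all $n$ columns yields
\begin{equation*}
\text{nnz}(\Xtilde{t}) = \sum_{s=1}^{n} k_s \leq n \cdot \frac{1}{\epsilon} = \frac{n}{\epsilon},
\end{equation*}
which is the desired bound. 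Finally I would observe that the subsequent column-wise normalization (line~7 of Algorithm~\ref{alg:method}) does not change the sparsity pattern, so the bound continues to hold for the normalized output.

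Honestly there is no deep obstacle here; the argument is essentially a pigeonhole-style bound on columns of a sub-stochastic matrix whose non-zeros are lower-bounded by $\epsilon$. The only mild subtlety worth stating explicitly is that sparsification may strictly decrease the column sum (it is sub-stochastic, not necessarily stochastic, after truncation), but this only tightens the inequality $k_s \epsilon \leq \sum_u \Xtilde{t}[u,s] \leq 1$ and therefore does not affect the $O(n/\epsilon)$ bound.
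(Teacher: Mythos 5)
Your proposal is correct and follows essentially the same argument as the paper: both rest on the column stochasticity of $\X{t}$ (Lemma~\ref{lemma:transition}) and the observation that each surviving entry is at least $\epsilon$, so no column can retain more than $1/\epsilon$ non-zeros. The only cosmetic difference is that you argue directly while the paper phrases the per-column bound as a proof by contradiction; your explicit remark that post-truncation columns are merely sub-stochastic (which only tightens the inequality) and that the final normalization preserves the sparsity pattern is a nice touch but does not change the substance.
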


\begin{proof}
Let $\vect{\tilde{x}}_{t,s}$ denote the $s$-th column of $\Xtilde{t}$.
Then, the claim is represented as $\text{nnz}(\vect{\tilde{x}}_{t,s}) \leq  \frac{1}{\epsilon}$, and proved by contradiction.
Let's say the claim is false, i.e., $\text{nnz}(\vect{\tilde{x}}_{t,s}) >  \frac{1}{\epsilon}$.
Assume $\vect{\tilde{x}}_{t,s}$ has $\frac{1}{\epsilon} + 1$ non-zeros after the sparsification.
This implies their weights were $\geq \epsilon$, and the sum of their weights was $\geq (\frac{1}{\epsilon} + 1)\epsilon = 1 + \epsilon$ before the sparsification.
For $0 < \epsilon < 1$, this contradicts to the fact that $\X{t}$ is column stochastic, i.e., each column sum of $\X{t}$ is $1$.
Therefore, $\text{nnz}(\vect{\tilde{x}}_{t,s}) \leq  \frac{1}{\epsilon}$ after the sparsification, and it holds for every column.
Thus, $\text{nnz}(\Xtilde{t}) \leq n/\epsilon = O(n/\epsilon)$.
\end{proof}

\begin{lemma}
\label{lemma:time}
For each time step $t$, Algorithm 1 of \method takes $O(n_tn/\epsilon + n_t^2K)$ time in expectation.
\end{lemma}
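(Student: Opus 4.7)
The plan is to walk line by line through Algorithm~\ref{alg:method}, bound each step with the sparsity structure already in hand, and sum the costs. Two ingredients drive the analysis: the body's power-iteration argument that each multiplication $\Atilde{t}^{\top} \mat{M}_{t}^{(k-1)}$ restricted to the activated $n_t \times n_t$ sub-matrix $\Atilde{\V{t}}$ costs $O(m_t n_t) = O(n_t^2)$ (since $m_t = C_t n_t$ with $\bar{C}_t$ bounded on average in real dynamic graphs, Table~\ref{tab:data}), and Lemma~\ref{lemma:nnz}, which gives $\text{nnz}(\Xtilde{t-1}) = O(n/\epsilon)$. For lines 1--2, line 1 row-normalizes $\A{t}$ in $O(m_t) = O(n_t)$ time, and line 2 runs $K$ iterations of $\mat{M}_{t}^{(k)} \leftarrow \I{n} + c\Atilde{t}^{\top}\mat{M}_{t}^{(k-1)}$; storing only the non-trivial activated block of $\mat{M}_{t}^{(k)}$ and treating the identity contribution implicitly reduces each iteration to a single sparse-dense multiplication on the $n_t \times n_t$ block at cost $O(n_t^2)$, giving $O(n_t^2 K)$ total plus a one-time $O(n)$ to materialize diagonal entries on non-activated nodes. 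Consequently $\matS{t} = \Lrwr{t}$ is dense on the activated block, so $\text{nnz}(\matS{t}) = O(n_t^2 + n)$ with every column having at most $O(n_t)$ non-zeros from the activated block (plus a trivial self-loop entry elsewhere).

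Next I would analyze line 4, which is the dominant multiplication. Using the standard SpGEMM flop count $\sum_j \text{nnz}((\matS{t})_{\cdot j}) \cdot \text{nnz}((\Xtilde{t-1})_{j \cdot})$, and the column-sparsity bound on $\matS{t}$ just obtained, the cost is $O(n_t) \cdot \text{nnz}(\Xtilde{t-1}) = O(n_t \, n/\epsilon)$. This also bounds $\text{nnz}(\T{t})$. Lines 5--7 (the convex combination, the threshold filter, and the column-wise normalization) each scan their inputs in time $O(\text{nnz}(\matS{t}) + \text{nnz}(\T{t})) = O(n_t n/\epsilon)$, after which Lemma~\ref{lemma:nnz} contracts the output back to $O(n/\epsilon)$ non-zeros. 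Adding all contributions gives $O(n_t + n_t^2 K + n + n_t n/\epsilon) = O(n_t n/\epsilon + n_t^2 K)$, which is the claim.

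The main obstacle I expect is making the SpGEMM bound on line 4 fully rigorous: one must carefully separate the dense activated-block portion of $\matS{t}$ from its trivial self-loop contributions on non-activated rows so that the $O(n_t)$ per-column bound is charged only against the interacting part of $\Xtilde{t-1}$, and confirm along the way that $\text{nnz}(\T{t}) = O(n_t n/\epsilon)$ so that line 5's sparse addition does not blow up. The qualifier \emph{in expectation} is absorbed by the same average-case assumption $\bar{m}_t = \bar{C}_t \bar{n}_t$ with $\bar{C}_t$ bounded that already underpins the power-iteration bound; conditioned on that assumption, the step-by-step tally above is deterministic.
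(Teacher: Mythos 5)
Your proposal is correct and follows essentially the same route as the paper's proof: a line-by-line tally that exploits the block structure of the activated-node submatrix for the $O(n_t^2 K)$ power-iteration cost and uses Lemma~\ref{lemma:nnz} to bound the sparse product in line 4 by $O(n_t n/\epsilon)$. The only cosmetic difference is that you bound line 4 via the standard SpGEMM flop count over the inner index, whereas the paper transposes the factors and invokes its Lemma~\ref{lemma:spmm}; both yield the same $O(n_t n/\epsilon)$ bound.
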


\begin{proof}
Table~\ref{tab:time_comp_each} summarizes the time complexity of each step of \method.
Line 1 normalizes the self-looped adjacency matrix $\A{t}$.
Let $m_t$ be the number of edges in $\A{t}$ without self-loops, and $n$ be the number of all nodes.
Then, it takes $O(m_t + n)$ to obtain $\mat{D}_{t}^{-1}$ and multiply it and $\A{t}$.

Line 2 performs the power-iteration to compute $\Lrwr{t}$.
Let $\V{t}$ denote the set of activated nodes, and $n_t=|\V{t}|$ at time $t$. Then, $\Atilde{t}^{\top}$ is reordered by activated and non-activated nodes as follows:
\begin{equation*}
    \Atilde{t}^{\top} = \begin{bmatrix}
        \Atilde{\V{t}}^{\top} & \mat{O} \\
        \mat{O}^{\top} & \I{n-n_t}
    \end{bmatrix}
\end{equation*}
where $\mat{O}\!\in\!\mathbb{R}^{n_t \times n-n_t}$ is a zero matrix, and $\Atilde{\V{t}}\!\in\!\mathbb{R}^{n_t \times n_t}$ is the normalized adjacency matrix for the activated nodes in $\V{t}$.
Then, the $k$-th iteration is represented as follows:
{\small
\begin{align*}
    \mat{M}_{t}^{(k)} &= \I{n} + c\Atilde{t}^{\top}\mat{M}_{t}^{(k-1)} \\
    \mat{M}_{t}^{(k)} &=
    \begin{bmatrix}
        \I{n_t}  & \mat{O} \\
        \mat{O}^{\top} & \I{n-n_t}
    \end{bmatrix}
    + c \begin{bmatrix}
        \Atilde{\V{t}}^{\top} & \mat{O} \\
        \mat{O}^{\top} & \I{n-n_t}
    \end{bmatrix}
    \begin{bmatrix}
        \mat{M}_{t,11}^{(k-1)} & \mat{M}_{t,12}^{(k-1)} \\
        \mat{M}_{t,21}^{(k-1)} & \mat{M}_{t,22}^{(k-1)}
    \end{bmatrix} \\
    \mat{M}_{t}^{(k)} &=
    \begin{bmatrix}
        \I{n_t} + c\Atilde{\V{t}}^{\top}\mat{M}_{t,11}^{(k-1)} & c\Atilde{\V{t}}^{\top}\mat{M}_{t,12}^{(k-1)} \\
        c\mat{M}_{t,21}^{(k-1)} & \I{n-n_t} + c\mat{M}_{t,22}^{(k-1)}
    \end{bmatrix}
\end{align*}
}

\setlength{\tabcolsep}{13pt}
\begin{table}[t!]
    \vspace{-3mm}
    \caption{Time complexity of Algorithm~\ref{alg:method}}
    \label{tab:time_comp_each}
    \small
    \begin{tabular}{@{}cll@{}}
        \toprule
        \textbf{Line} & \textbf{Task} & \textbf{Time Complexity} \\
        \midrule
        1 & normalize $\A{t}$ & $O(m_t + n)$ \\
        2 & power-iterate $\Lrwr{t}$ & $O(n_t^2K + nK)$ \\
        4 & $\T{t} \leftarrow \matS{t}\Xtilde{t}$ & $O(nn_t/\epsilon)$ \\
        5 & $\X{t} \leftarrow (1 - \gamma)\matS{t} + \gamma\T{t}$ & $O(nn_t + n/\epsilon)$ \\
        6 & filter $\X{t}$ into $\Xtilde{t}$ & $O(nn_t + n/\epsilon)$ \\
        7 & normalize $\Xtilde{t}$ & $O(n/\epsilon)$ \\
        \midrule
        \multicolumn{1}{l}{\textbf{Total}} & & $O(n_tn/\epsilon + n_t^2K)$ \\
        \bottomrule
    \end{tabular}
\end{table}

Note that $\mat{M}_{t}^{(0)} = \I{n}$ at the beginning.
Thus, $\mat{M}_{t,12}^{(i)}$ and $\mat{M}_{t,21}^{(i)}$ remain as zeros for every iteration $i$ because they are off the diagonal.
Therefore, $\mat{M}_{t}^{(k)}$ is written as follows:
\begin{align*}
    \mat{M}_{t}^{(k)} =
    \begin{bmatrix}
        \I{n_t} + c\Atilde{\V{t}}^{\top}\mat{M}_{t,11}^{(k-1)} & \mat{O} \\
        \mat{O}^{\top} & \I{n-n_t} + c\mat{M}_{t,22}^{(k-1)}
    \end{bmatrix}
\end{align*}
where $\mat{M}_{t,22}^{(i)}$ remains diagonal as $\mat{M}_{t,22}^{(0)} = \I{n-n_t}$.
In the equation, $\Atilde{\V{t}}^{\top}\mat{M}_{t,11}^{(k-1)}$ takes $O(|\Atilde{\V{t}}^{\top}|n_t)$ time where $\Atilde{\V{t}}$ is sparse (Lemma~\ref{lemma:spmm}), and $|\Atilde{\V{t}}^{\top}| = m_t + n_t$ where $m_t$ and $n_t$ are the numbers of edges and self-loops, respectively.
Also, in real-world dynamic graphs, $m_t = Cn_t$ and $C$ is a constant.
The average of $m_t/n_t$ over time was between 2.2 and 32.6 in the datasets used in this work (see Table 1 of the submitted paper).
Thus, $\Atilde{\V{t}}^{\top}\mat{M}_{t,11}^{(k-1)}$ takes $O(n_t^{2})$ time on average.
Also, it takes $O(n-n_t)$ time to compute $\I{n-n_t} + c\mat{M}_{t,22}^{(k-1)}$; each iteration takes $O(n_t^2 + n)$; thus, $O(n_t^2K + nK)$ time is required for $K$ iterations.

Line 4 computes $\matS{t}\Xtilde{t}$ where $\matS{t} = \Lrwr{t}$. Note that the structure of $\Lrwr{t}$ is the same as that of $\mat{M}_{t}^{(K)}$, which is a block diagonal matrix as shown in the above.
Thus, $\matS{t}\Xtilde{t}$ is represented as follows:
\begin{align*}
    \small
    \T{t} = \matS{t}\Xtilde{t} =
    \begin{bmatrix}
        \matS{t,11} & \mat{O} \\
        \mat{O}^{\top} & \matS{t,22}
    \end{bmatrix}
    \begin{bmatrix}
        \Xtilde{t,11} & \Xtilde{t,12} \\
        \Xtilde{t,21} & \Xtilde{t,22}
    \end{bmatrix}
\end{align*}
where $\matS{t,22} = (1-c)\mat{M}_{t}^{(K)}$ converges to $\I{n-n_t}$ as $K$ increases, and it is normalized (line 15).
Then, the above is written as:
\begin{align*}
    \small
    \T{t} = \matS{t}\Xtilde{t}
    &=
    \begin{bmatrix}
        \matS{t,11} & \mat{O} \\
        \mat{O}^{\top} & \I{n-n_t}
    \end{bmatrix}
    \begin{bmatrix}
        \Xtilde{t,11} & \Xtilde{t,12} \\
        \Xtilde{t,21} & \Xtilde{t,22}
    \end{bmatrix} \\
    &=
    \begin{bmatrix}
        \matS{t,11}\Xtilde{t,11} & \matS{t,11}\Xtilde{t,12} \\
        \Xtilde{t,21} & \Xtilde{t,22}
    \end{bmatrix}
\end{align*}

Note $\Xtilde{t}$ is sparse, and has $O(n/\epsilon)$ non-zeros (Lemma~\ref{lemma:nnz}); thus, its sub-matrices have less non-zeros than that.
To exploit the sparsity, suppose we compute $(\Xtilde{t,11}^{\top}\matS{t,11}^{\top})^{\top}$ and $(\Xtilde{t,12}^{\top}\matS{t,11}^{\top})^{\top}$.
Then, they take $O(n_t n/\epsilon)$ time for sparse matrix multiplications (Lemma~\ref{lemma:spmm}).

Lines 5 and 6 compute $\X{t} \leftarrow (1 - \gamma)\matS{t} + \gamma\T{t}$, and filter $\X{t}$.
The worst number of non-zeros of $\matS{t}$ is $O(n_t^2 + n)$ for $\matS{t,11}$ and $\matS{t,22}$.
The worst number of non-zeros of $\T{t}$ is decided when $\T{t,11}$ and $\T{t,12}$ are fully dense; then, it is $O(nn_t + n/\epsilon)$.
Thus, $O(nn_t + n/\epsilon)$ time is required to add them and filter $\X{t}$ because it is expected to be proportional to $\text{nnz}(\X{t})$.
After filtering, $\Xtilde{t}$ has $O(n/\epsilon)$ non-zeros by Lemma~\ref{lemma:nnz}; it takes $O(n/\epsilon)$ time to normalize $\Xtilde{t}$.
\end{proof}

\begin{lemma}
\label{lemma:spmm}
Let $\mat{A}$ and $\mat{B}$ be $p \times q$ and $q \times r$ matrices, respectively, and $\mat{A}$ has $|\mat{A}|$ non-zero entries.
Then, the sparse matrix multiplication $\mat{C}=\mat{A}\mat{B}$ takes $O(|\mat{A}|r)$ time.
\end{lemma}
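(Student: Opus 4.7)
The plan is to bound the work of computing $\mat{C}=\mat{A}\mat{B}$ by writing out the entry-wise definition $c_{ik}=\sum_{j=1}^{q} a_{ij} b_{jk}$ and observing that only indices $j$ with $a_{ij}\neq 0$ contribute, so the inner sum for fixed $(i,k)$ costs $O(|\mat{A}_{i,:}|)$, where $|\mat{A}_{i,:}|$ denotes the number of non-zeros in row $i$ of $\mat{A}$. Summing over $k$ and $i$ will then give the claim by a simple double-counting argument.

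First I would assume $\mat{A}$ is represented in a standard sparse format (e.g., CSR or a coordinate list), so that its non-zero entries can be enumerated in $O(|\mat{A}|)$ time. Next I would reorganize the computation so that the outer loop ranges over the non-zero entries $a_{ij}$ of $\mat{A}$, and for each such entry the inner loop ranges over the $r$ columns $k$ of $\mat{B}$, performing the update $c_{ik} \mathrel{+}= a_{ij} b_{jk}$. Since row $j$ of $\mat{B}$ has length $r$, each non-zero of $\mat{A}$ incurs $O(r)$ work. Summing over all $|\mat{A}|$ non-zero entries yields the bound $O(|\mat{A}|\,r)$. A symmetric reorganization (outer loop over columns $k$ of $\mat{B}$, inner sparse matrix-vector product $\mat{A}\mat{B}_{:,k}$) gives the same bound and may be cleaner to state.

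The main subtlety — rather than a true obstacle, since this is a textbook result — is accounting for the initialization and output storage of $\mat{C}$, which has $p\,r$ entries in the worst case. I would handle this by only initializing rows $i$ of $\mat{C}$ for which row $i$ of $\mat{A}$ contains at least one non-zero; any all-zero row of $\mat{A}$ contributes an all-zero row to $\mat{C}$ and can be skipped. The number of nontrivial output rows is at most $|\mat{A}|$, so the initialization fits within the same $O(|\mat{A}|\,r)$ budget and the total cost is dominated by the multiplication work, completing the proof.
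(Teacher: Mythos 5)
Your proof is correct and, up to the order of the loops, is the same counting argument the paper uses: the paper decomposes $\mat{C}$ column-by-column into sparse matrix--vector products each costing $O(|\mat{A}|)$, which is exactly the ``symmetric reorganization'' you mention at the end. The extra care you take with initializing only the nontrivial rows of $\mat{C}$ is a reasonable refinement but not needed for the stated bound.
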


\begin{proof}
The $i$-th column of $\mat{C}$ is the result of a sparse matrix vector multiplication with $\mat{A}$ and the $i$-th column of $\mat{B}$, taking $O(|\mat{A}|)$ for a sparse matrix $\mat{A}$ in the CSR format.
Thus, for $r$ columns, it takes $O(|\mat{A}|r)$ time.
\end{proof}

\section{Discussions}
\label{sec:appendix:discussion}
We discuss several aspects about dynamic graph augmentation and our \method in this section.

\textbf{Insertion or deletion of edges.}
\method naturally supports the operations of edge insertion and deletion on discrete-time dynamic graphs (DTDG).
According to~\cite{skarding2021foundations}, a DTDG is an ordered sequence of graph snapshots, and it can represent \textit{temporal graphs} or \textit{evolving graphs}.
A temporal graph is a network of edges where they exist for a certain time range.
In DTDGs, a set of edges in graph snapshot $\G{t}$ only exists at time $t$.
Thus, it is more natural to think of a link as an event with a duration~\cite{skarding2021foundations}, rather than insertion or deletion.
By default, \method can augment a sequence of graph snapshots which represents such a temporal graph.

On the other hand, there are edge insertion and deletion in evolving graphs.
An evolving graph keeps the overall structure by continually adding or deleting edges.
Each snapshot also represents the adjacency matrix $\A{t}$ at each time $t$, but in evolving graphs, $\A{t}$ and $\A{t+1}$ are mostly similar while they are different in temporal graphs.
As the evolving graph supports the insertion and deletion on edges, \method also naturally supports them.
If an edge $(u, v)$ is inserted or deleted at time $t$, we simply mark $\A{t;uv}$ as $1$ (insertion) or $0$ (deletion), and then perform \method on the graph snapshot.

\textbf{Insertion or deletion of nodes.}
\method can support node insertion and deletion after a few modifications.
For the node insertion, suppose a node $u$ is inserted at time $t$, i.e., the node did not exist before time $t$.
Then, we assign the node $u$ to a position corresponding to the last index of $\Xtilde{t-1}$ as follows:
\begin{align*}
    \small
    \underbrace{\Xtilde{t-1}}_{\text{Before}}
    \xRightarrow{\text{$u$ is inserted}}
    \underbrace{
        \Xtilde{t-1} \leftarrow \begin{bmatrix}
            \Xtilde{t-1} & \vect{0} \\
            \vect{0}^{\top} & 1
        \end{bmatrix}
    }_{\text{After}}
\end{align*}
where $\vect{0}$ is an $n$-dimensional column vector of size $n$.
After then, \method performs its diffusion with new $\A{t}$ and $\Xtilde{t-1}$ in which the node $u$ is inserted.

For the node deletion, suppose a node $u$ is deleted at time $t$, i.e., the node will not exist from time $t$.
For this, we delete the $u$-th row and column of $\Xtilde{t-1}$, and normalize the modified $\Xtilde{t-1}$ column-wise to make it column stochastic.
After reordering node indices to match remaining nodes, \method performs its diffusion with new $\A{t}$ and $\Xtilde{t-1}$ in which the node $u$ is deleted.

\textbf{Discussion on other types of dynamic GNNs.}
There is another type of dynamic graphs, called traffic networks (e.g., roads), where the structure of a graph is static while node features vary over time.
For learning traffic networks, many models have been proposed~\cite{yu2018spatio,guo2019attention,wu2020connecting}, and they are used to do traffic forecasting.
However, those methods are our of the scope of the problem setting of \method because \method aims to augment a discrete-time dynamic graph where the structure of a graph changes over time, not remains static.
One promising future research direction is to devise an augmentation method for such traffic networks, which synthetically generates node features over time or forms temporal graph snapshots based on node feature similarities.
Another research direction is to expand our method to dynamic signed graphs~\cite{raghavendra2022signed} using signed graph diffusion~\cite{DBLP:conf/icdm/JungJSK16,jung2022signed}.

\section{Additional Experiments}
\label{sec:appendix:experiments}

\subsection{Effect of Hyperparameters}
We additionally conduct experiments to investigate the effect of hyperparameters of \method.
Figure~\ref{fig:effect_ab} demonstrates the effect of restart probability $\alpha$ and time travel probability $\beta$ where $0.1 \leq \alpha, \beta \leq 0.5$ in the BitcoinAlpha and the Brain datasets.
As shown in the figure, the effect of $\alpha$ and $\beta$ depends on datasets and models (similar in other datasets).
Figure~\ref{fig:effect_gamma:link} shows the results of link prediction in the RedditBody dataset according to values of $\gamma$.
Similar to the BitcoinAlpha and the WikiElec dataset, it is crucial to properly adjust $\gamma$ for the performance.

\begin{figure*}[t]
    \vspace{-7mm}
    \hspace{-2mm}
    \subfigure[GCN+\method on\\{\color{white}--}BitcoinAlpha (AUC)]{
        \label{fig:ab:alpha:gcn}
        \includegraphics[width=0.16\linewidth]{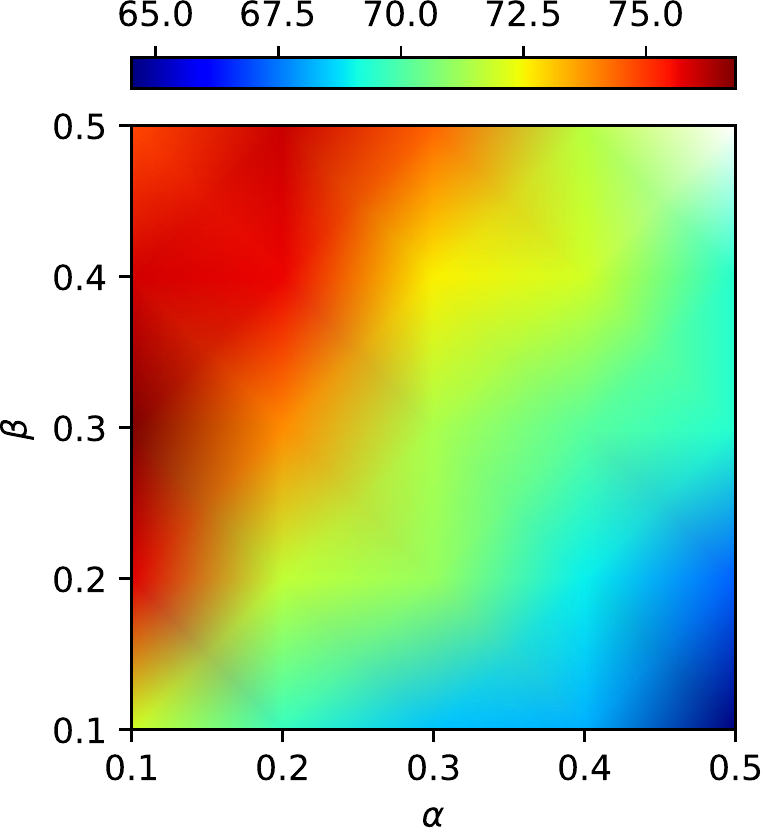}
    }
    \hspace{-2mm}
    \subfigure[GCRN+\method on\\{\color{white}--}BitcoinAlpha (AUC)]{
        \label{fig:ab:alpha:gcrn}
        \includegraphics[width=0.16\linewidth]{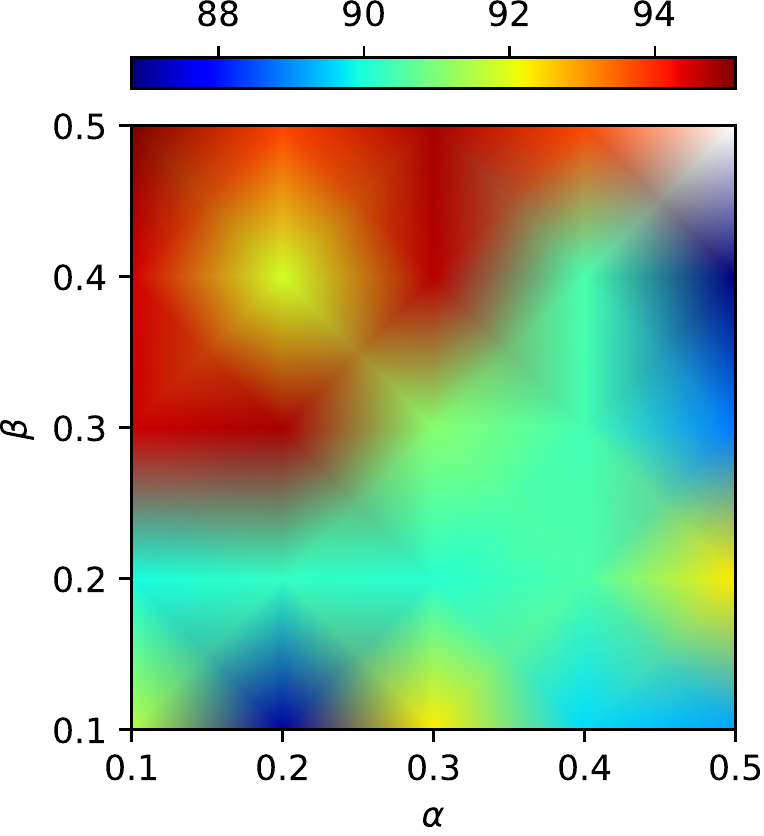}
    }
    \hspace{-2mm}
    \subfigure[EGCN+\method on\\{\color{white}--}BitcoinAlpha (AUC)]{
        \label{fig:ab:alpha:egcn}
        \includegraphics[width=0.16\linewidth]{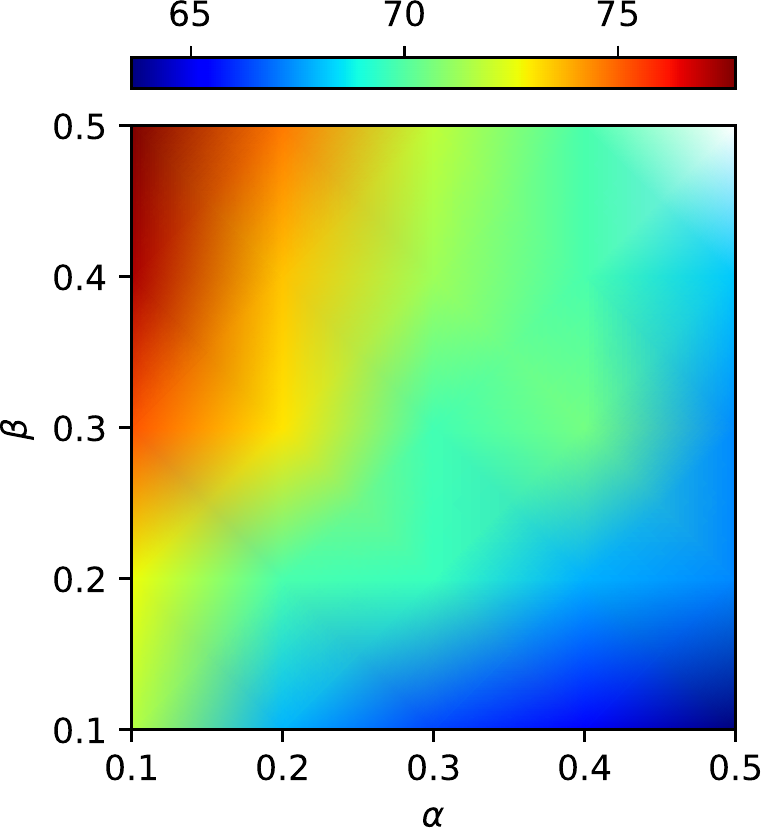}
    }
    \hspace{-2mm}
    \subfigure[GCN+\method on\\{\color{white}---}Brain (Macro F1)]{
        \label{fig:ab:brain:gcn}
        \includegraphics[width=0.16\linewidth]{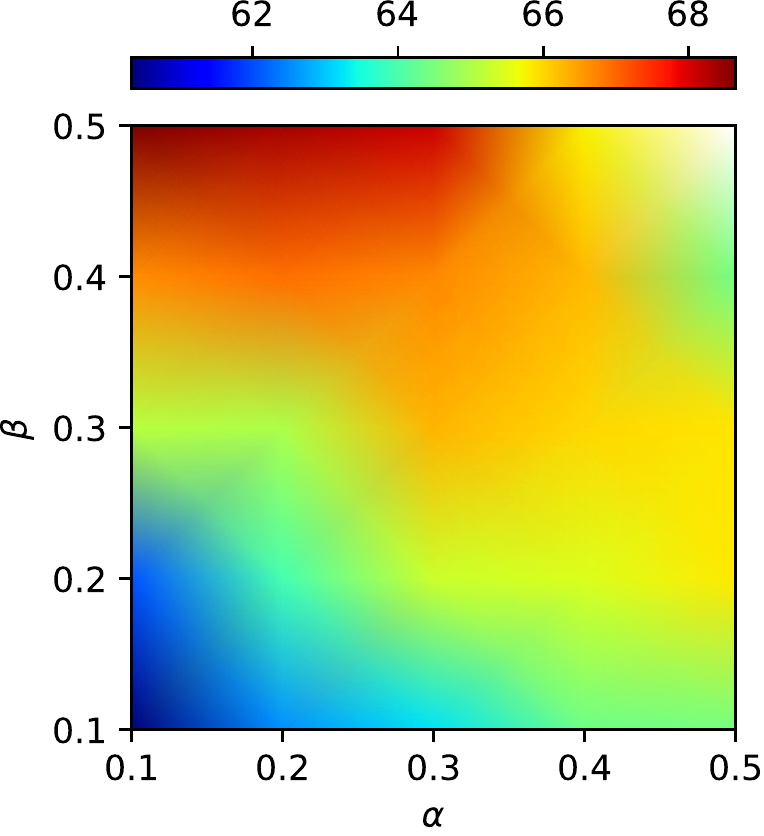}
    }
    \hspace{-2mm}
    \subfigure[GCRN+\method on\\{\color{white}---}Brain (Macro F1)]{
        \label{fig:ab:brain:gcrn}
        \includegraphics[width=0.16\linewidth]{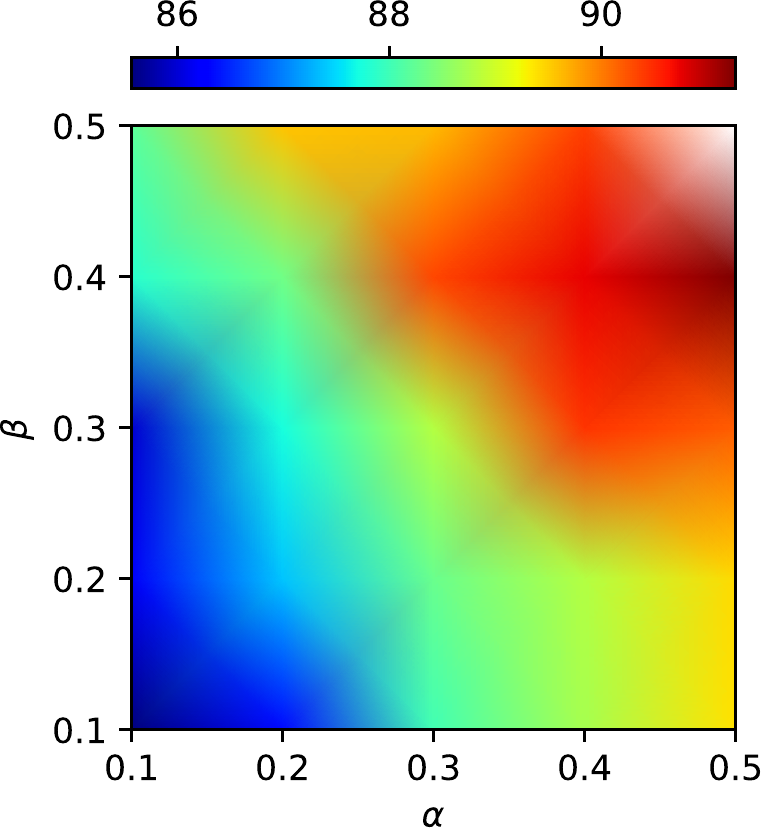}
    }
    \hspace{-2mm}
    \subfigure[EGCN+\method on\\{\color{white}---}Brain (Macro F1)]{
        \label{fig:ab:brain:egcn}
        \includegraphics[width=0.16\linewidth]{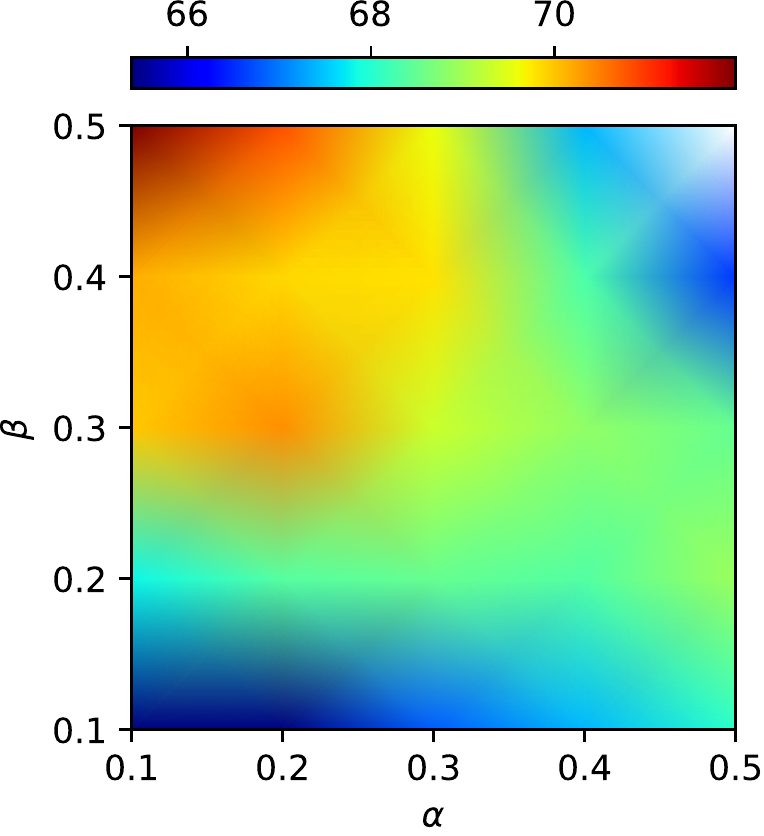}
    }
    \hspace{-2mm}
    \caption{
        \label{fig:effect_ab}
        Effect of restart probability $\alpha$ and time travel probability $\beta$ on the BitcoinAlpha and the Brain datasets.
    }
\end{figure*}
\begin{figure}[t]
    \vspace{-2mm}
    {\centering
        \includegraphics[width=1.0\linewidth]{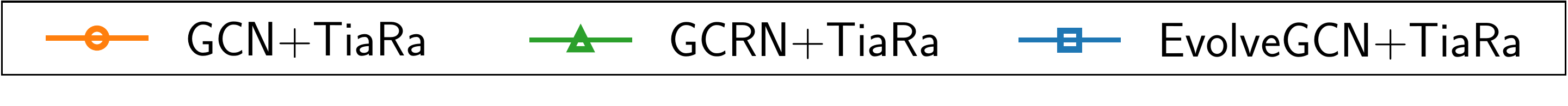} \\
        \hspace{18mm}
        \subfigure[RedditBody]{
            \label{fig:gamma:redditbody}
            \centering
            \includegraphics[width=0.45\linewidth]{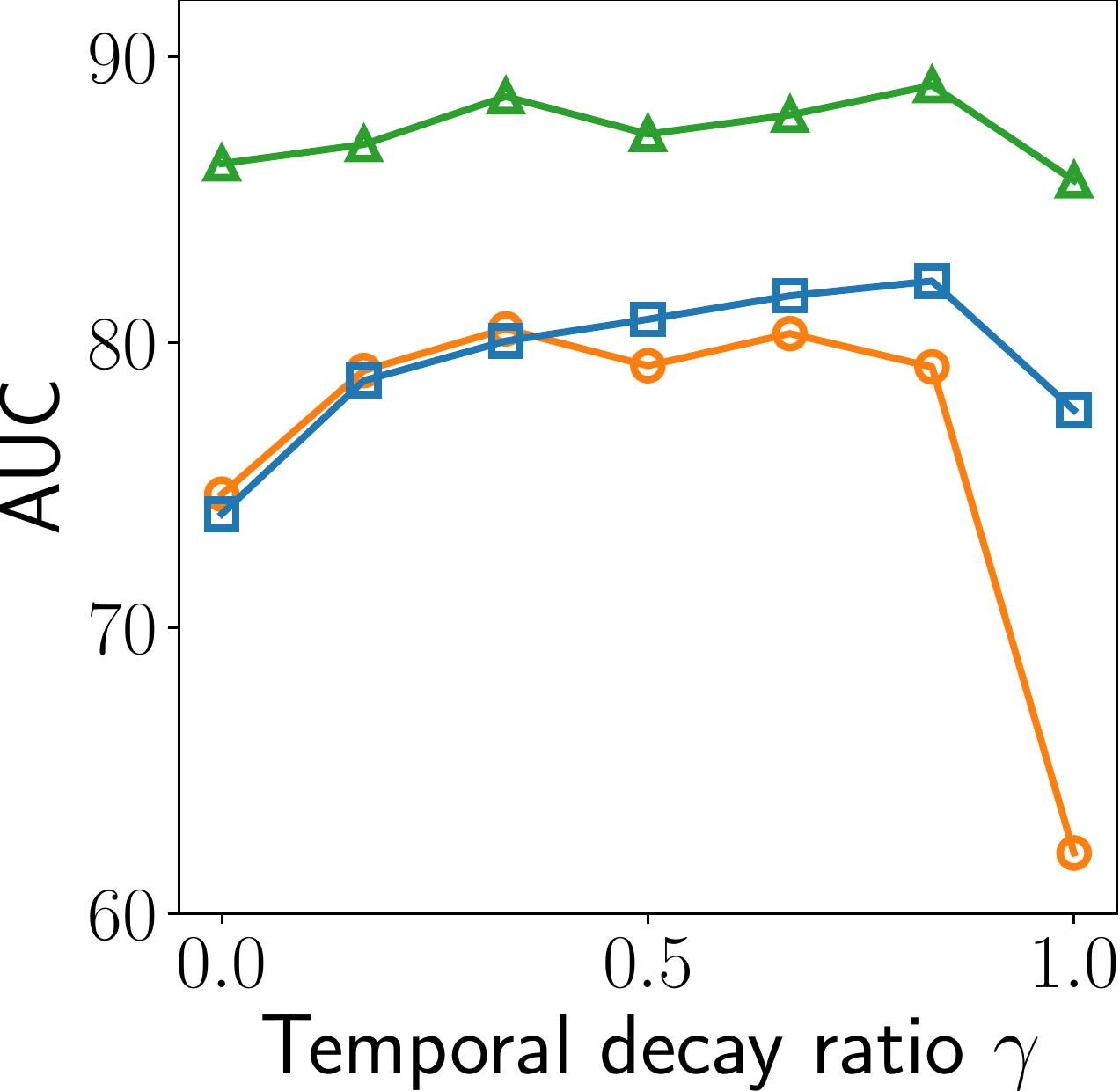}
        }
    }
    \caption{
        \label{fig:effect_gamma:link}
        Effect of the temporal decay ratio $\gamma$ w.r.t link prediction accuracy in the RedditBody dataset.
    }
    \vspace{-2mm}
\end{figure}

\section{Experimental Setting}
\label{sec:appendix:setting}
We describe detailed settings for our experiments including datasets, computing environment, hyperparemters, etc.

\subsection{Detailed Information of Datasets}
We use the following datasets for the temporal link prediction task:
\begin{itemize}
    \item {
        \textbf{BitcoinAlpha}\footnote{https://snap.stanford.edu/data/soc-sign-bitcoin-alpha.html}:
        It is a who-trusts-whom network of people who trade using Bitcoin on platforms called Bitcoin Alpha~\cite{kumar2018rev2}.
        We use only edges and their time stamps without rating in this work.
    }
    \item {
        \textbf{WikiElec}\footnote{https://snap.stanford.edu/data/wiki-Elec.html}:
        It is a voting network for Wikipedia elections~\cite{leskovec2010predicting}.
    }
    \item {
        \textbf{RedditBody}\footnote{https://snap.stanford.edu/data/soc-RedditHyperlinks.html}:
        This dataset is a hyperlink network representing connections between two posts, called sub-reddits, where a sub-reddit is a community on the Reddit~\cite{kumar2018community}.
    }
\end{itemize}

For the node classification task, we utilize the following datasets\footnote{\label{star_data}https://tinyurl.com/y67ywq6j}  used in~\cite{xu2019spatio}:
\begin{itemize}
    \item {
        \textbf{Brain}:
        This dataset is a network of brain issues where a node represents the tidy cube of brain tissue, and two nodes are connected if they show similar degree of activation.
        \citeauthor{xu2019spatio} applied PCA to the functional magnetic resonance imaging (fMRI) data to generate initial node features of $20$ dimensions.
    }
    \item {
        \textbf{DBLP-3} and \textbf{DBLP-5}:
        These datasets were extracted from the DBLP, and they are co-authorship networks where nodes represent authors.
        The authors in DBLP-3 and DBLP-5 are from three and five research areas, respectively.
        \citeauthor{xu2019spatio} used titles and abstracts of papers to produce node features by using word2vec where the features dimension is $100$.
    }
    \item {
        \textbf{Reddit}:
        This is a post network where a node is a post and two posted are connected if they share similar keywords.
        \citeauthor{xu2019spatio} applied word2vec to comments of a post to extract its node features of $20$ dimensions.
    }
\end{itemize}

\subsection{Computing Environment and Implementation}
We describe detailed specifications of computing environment and implementation used for experiments.
\begin{itemize}
    \item {CPU: Intel(R) Xeon(R) Silver 4215R CPU @ 3.20GHz}
    \item {GPU: NVIDIA GeForce RTX 3090 24GB}
    \item {CUDA: 11.3, Python:  3.10.4, PyTorch: 1.12.0}
\end{itemize}

\subsection{Baseline GNN Models}
\begin{itemize}
    \item {
        \textbf{GCN}~\cite{kipf2017semi}:
        This is a popular GNN model for a static graph, and based on graph convolutional operations.
        GCN first symmetrically normalizes an self-looped adjacency matrix $\mat{A}$ (i.e., self-loops are added) as $\Atilde{} = \mat{D}^{-\frac{1}{2}} \mat{A} \mat{D}^{-\frac{1}{2}}$ where $\mat{D}$ is a diagonal degree matrix of $\mat{A}$.
        After the normalization, it stacks a layer $\sigma(\Atilde{} \mat{X} \mat{W})$ given node feature matrix $\mat{X}$ and trainable parameter $\mat{W}$ where $\sigma(\cdot)$ is a non-linear activation fuction such as ReLU.
        The hyperparameters are hidden dimension, dropout ratio, and the number of layers.
    }
    \item {
        \textbf{GCRN}\footnote{\label{tgt} \url{https://github.com/benedekrozemberczki/pytorch_geometric_temporal}}~\cite{seo2018structured}:
        This combines GNN and RNN to learn dynamic graphs.
        In general, each operation in the RNN cell is based on matrix multiplications such as $\mat{W}_{x}\mat{X}_{t}$ and $\mat{W}_{h}\mat{H}_{t}$ where $\mat{W}_{*}$ is a parameter, $\mat{X}_{t}$ is an input feature, and $\mat{H}_{t}$ is a hidden state at time $t$.
        GCRN extends such matrix multiplications by using graph convolution as $\mat{W} *_{g} \mat{X}$ where $*_{g}$ is a graph convolution operation.
        Note that \citeauthor{seo2018structured} employed a GNN based on Chebyshev polynomial~\cite{cnn_graph} for $*_{g}$ while we use a GCN~\cite{kipf2017semi}, described in the above, because the GCN shows better performance, and it is more lightweight than the Chebyshev network.
        The hyperparameters are hidden dimension, the number of layers, dropout ratio, and the type of RNN (LSTM or GRU).
    }
    \item {
        \textbf{EvolveGCN}~\cite{ParejaDCMSKKSL20}: It uses an RNN to produce  parameters $\mat{W}_{t}$ at each time step $t$, which is used as GCN parameters while being treated as a hidden state of the RNN.
        The authors interpret the process as the evolution of GCN parameters over time.
        The hyperparameters are hidden dimension, the number of layers, dropout ratio, and the type of RNN (LSTM or GRU).
    }
\end{itemize}

\subsection{Baseline Augmentation Methods}
\begin{itemize}
    \item {%
        \textbf{DropEdge}\footnote{\url{https://github.com/dmlc/dgl}}~\cite{Rong2020DropEdge}: It is a drop-based method randomly removing edges at each epoch.
        The hyperparameter is a dropedge ratio b.t.w. $0$ and $1$.
    }
    \item {
        \textbf{GDC}~\cite{klicpera2019diffusion}: It is a diffusion-based method where we use PPR for GDC as our approach is based on random walks.
        The hyperparameter is $\alpha$, a restart probability.
    }
    \item {
        \textbf{Merge}: It is a simple baseline merging adjacency matrices from time $1$ to $t$ when training a model at time $t$. The overlapped entries are processed as $1$.
        This method has no hyperparameters.
    }
\end{itemize}

\subsection{Implementation Details}

\textbf{Details on the temporal link prediction task.}
The datasets for this task do not contain initial node features; thus, we first preprocess random features $\F{t}$ of $32$ dimensions for each time $t$, which are sampled from a {standard normal distribution}, and use them for all models as initial features.
One may use advanced methods~\cite{DBLP:conf/kdd/YooJJK22} for estimating node features when they are partially unavailable in a graph.
For each dataset, we aggregate multiple edges within a specific time range, called \texttt{\small time\_aggregation}, to represent the data as a discrete-time dynamic graph (i.e., a temporal sequence of graph snapshots), and use {$1,200,000$} (seconds) for \texttt{\small  time\_aggregation}.
We also make each graph snapshot undirected since most GNNs require undirected graphs as input although this process is not mandatory for  \method.

To evaluate this task, we consider existing edges as positive samples, and extract negative edge samples as much as positive ones for each graph snapshot.
More specifically, let $d_{t,u}$ be the out-degree of node $u$ in a graph snapshot $\G{t}$.
Then, for each node $u$, we randomly sample $d_{t,u}$ nodes, $V_{t,u}=\{v_1, \cdots, v_{d_{t,u}}\}$, from other unconnected nodes except its neighbors, and $(u, v_i)$ s.t. $v_i\in V_{t,u}$ is considered as a negative sample.

As a decoder of the task, we use a simple MLP (i.e., \texttt{FC}-\texttt{ReLU}-\texttt{FC}) to predict whether a given pair of nodes forms an edge or not at a specific time step.
Given $(u, v)$ and time $t+1$, we concatenate two node embeddings of $u$ and $v$, which were produced at time $t$, as the input of the decoder.
The decoder outputs scores (or logits) for positive and negative classes, and forwards the scores to the cross entropy loss function.
The total loss is the mean of the losses of all samples across all time steps within a training (or validation/test) set.
The input dimension of the decoder is $64$, and the output dimension is $2$.
We set the hidden dimension of the decoder to $32$.
We use the same decoder for all models.

\def\arraystretch{1.2}
\setlength{\tabcolsep}{7.9pt}
\begin{table*}[!bp]
    \caption{Searched hyperparameters of GCN in the temporal link prediction task}
    \label{tab:hyperparam:gcn:link}
    \small
    \begin{tabular}{@{}ccccccccccc@{}}
        \hline
        \toprule
        \textbf{Augmentation} & \textbf{Dataset} & \textbf{\# of layers} & \textbf{\begin{tabular}[c]{@{}c@{}}learing \\ rate\end{tabular}} & \textbf{\begin{tabular}[c]{@{}c@{}}RNN \\ type\end{tabular}} & \textbf{\begin{tabular}[c]{@{}c@{}}dropout \\ ratio\end{tabular}} & \textbf{\begin{tabular}[c]{@{}c@{}}dropedge \\ ratio\end{tabular}} & \textbf{$\alpha$} & \textbf{$\beta$} & \textbf{$\epsilon$} & \textbf{\begin{tabular}[c]{@{}c@{}}symmetric \\ trick\end{tabular}} \\
        \midrule
        \multirow{3}{*}{\textsc{None}}
        & BitcoinAlpha & 3 & 0.05 & \multirow{3}{*}{-} & 0 & \multirow{3}{*}{-} & \multirow{3}{*}{-} & \multirow{3}{*}{-} & \multirow{3}{*}{-} & \multirow{3}{*}{-} \\
        & WikiElec & 3 & 0.02 & & 0 & & & & & \\
        & RedditBody & 3 & 0.02 & & 0 & & & & & \\
        \midrule
        \multirow{3}{*}{\textsc{DropEdge}} & BitcoinAlpha & 3 & 0.05 & \multirow{3}{*}{-} & 0 & 0.6 & \multirow{3}{*}{-} & \multirow{3}{*}{-} & \multirow{3}{*}{-} & \multirow{3}{*}{-} \\
        & WikiElec & 3 & 0.02 & & 0 & 0.9 & & & & \\
        & RedditBody & 3 & 0.02 & & 0 & 0.9 & & & & \\
        \midrule
        \multirow{3}{*}{\textsc{GDC}} & BitcoinAlpha & 3 & 0.05 & \multirow{3}{*}{-} & 0 & \multirow{3}{*}{-} & 0.4 & \multirow{3}{*}{-} & \multirow{3}{*}{0.001} & \multirow{3}{*}{Off} \\
        & WikiElec & 3 & 0.02 & & 0.5 & & 0.2 & & & \\
        & RedditBody & 3 & 0.02 & & 0.5 & & 0.2 & & & \\
        \midrule
        \multirow{3}{*}{\textsc{Merge}} & BitcoinAlpha & 3 & 0.05 & \multirow{3}{*}{-} & 0 & \multirow{3}{*}{-} & \multirow{3}{*}{-} & \multirow{3}{*}{-} & \multirow{3}{*}{-} & \multirow{3}{*}{-} \\
        & WikiElec & 3 & 0.05 & & 0 & & & & & \\
        & RedditBody & 2 & 0.02 & & 0 & & & & & \\
        \midrule
        \multirow{3}{*}{\textsc{Tiara}} & BitcoinAlpha & 3 & 0.05 & \multirow{3}{*}{-} & 0 & \multirow{3}{*}{-} & 0.05 & 0.2 & \multirow{3}{*}{0.001} & \multirow{3}{*}{Off} \\
        & WikiElec & 3 & 0.02 & & 0 & & 0.1 & 0.2 & & \\
        & RedditBody & 3 & 0.02 & & 0.5 & & 0.3 & 0.2 & & \\ 
        \bottomrule
        \hline
    \end{tabular}
\end{table*}

\textbf{Details on the node classification task.}
The original dynamic graphs have features, and they are represented as a sequence of graph snapshots.
As in \cite{xu2019spatio}, we randomly split nodes for a train/validation/test set with the 7:1:2 ratio.
In this task, as a decoder, we also use a simple MLP having the same architecture as the aforementioned decoder except that this task's decoder receives the embedding of each node on the last time step (note that labels of this task do not have temporal information).
The decoder outputs $L$ scores where $L$ is the number of node labels (or classes).
We also utilize the cross entropy loss function in this task.
The input dimension of the decoder is the feature dimension $d$ of $\F{t}$, and the output dimension is $L$.
We set the hidden dimension of the decoder to $32$.
We exploit the same decoder for all models.

\textbf{Symmetric trick.}
As suggested in~\cite{klicpera2019diffusion}, we also found that making $\Xtilde{t}$ symmetric can improve predictive performance in several settings, and call this a symmetric trick.
Since $\Xtilde{t}$ is not symmetric, we first make it symmetric as $\Xhat{t}\!=\!(\Xtilde{t}\!+\!\Xtilde{t}^{\top})/2$.
We then drop the weights of entries of $\Xhat{t}$ (i.e., replace non-zero elements to $1$), and the resulting matrix is denoted by $\Ahat{t}$.
Note $\Ahat{t}$ is not normalized.
Thus, we perform symmetric normalization on $\Ahat{t}$, i.e., $\Atilde{t} = \Dhat{t}^{-\frac{1}{2}} \Ahat{t} \Dhat{t}^{-\frac{1}{2}}$ where $\Dhat{t}$ is a diagonal degree matrix of $\Ahat{t}$.
After that, $\Atilde{t}$ is fed into a model as an augmented adjacency matrix.
We use this technique for graph diffusion-based methods such as \textsc{GDC} and \method.

\subsection{Hyperparameters}
We report the hyperparameter search bounds and the used configurations.
We summarize the information of hyperparameters that we tuned for experiments in Table~\ref{tab:hyperparam}.
\textsc{Common} indicates hyperparameters that commonly appear in GNN models or the optimizer.
We fix weight decay to $10^{-4}$, $K$ of $\method$ to $100$, and learning rate decay to $0.999$ for each epoch.
We fix the embedding dimension of GNN models to $d$ which is the feature dimension of $\F{t}$.
The search bound of each hyperparameters is described in Table~\ref{tab:search_bound}.
We summarize the final hyperparemters obtained by the search in Tables~\ref{tab:hyperparam:gcn:link}$\sim$\ref{tab:hyperparam:egcn:node}.

\def\arraystretch{1.2}
\setlength{\tabcolsep}{4pt}
\begin{table}[t!]
    \caption{Summary of hyperparameters}
    \label{tab:hyperparam}
    \small
    \begin{tabular}{@{}cll@{}}
        \toprule
        \textbf{Method} & \textbf{Hyperparameters} \\
        \midrule
        \textsc{Common} & learning rate, \# of layers, RNN type, dropout ratio \\
        \textsc{DropEdge} & dropedge ratio \\
        \textsc{GDC} & restart prob. $\alpha$, filtering threshold $\epsilon$, symmetric trick \\
        \method & restart prob. $\alpha$, time travel prob. $\beta$, \\
        & filtering threshold $\epsilon$, symmetric trick \\
        \bottomrule
    \end{tabular}
\end{table}

\def\arraystretch{1.2}
\setlength{\tabcolsep}{1.5pt}
\begin{table}[t!]
    \vspace{3mm}
    \caption{Search bounds of hyperparameters}
    \label{tab:search_bound}
    \small
    \begin{tabular}{@{}cll@{}}
        \toprule
        \textbf{Hyperparameters} & \textbf{Search bounds} \\
        \midrule
        learning rate & $\{0.01, 0.02, 0.05\}$ \\
        number of layers & $\lbrace 2, 3 \rbrace$ \\
        RNN type & $\lbrace \text{LSTM}, \text{GRU} \rbrace$ \\
        dropout ratio & $[0, 0.5]$ by $0.05$ \\
        dropedge ratio & $[0.1, 0.9]$ by $0.1$ \\
        $\alpha$ of GDC & {$\{0.01, 0.02, 0.05, 0.1, 0.15, 0.2, 0.3, 0.4, 0.5\}$} \\
        $\alpha$, $\beta$ of \method & $\lbrace (\alpha, \beta)\rbrace$ where $\alpha \in (0,1)$, $\beta \in (0,1),$ \\
        & and $\alpha + \beta \in (0,1)$ \\
        $\epsilon$ & $\{0.0001, 0.001, 0.01\}$ \\
        \bottomrule
    \end{tabular}
\end{table}

\def\arraystretch{1.2}
\setlength{\tabcolsep}{7.9pt}
\begin{table*}[t]
    \caption{Searched hyperparameters of GCN in the node classification task}
    \label{tab:hyperparam:gcn:node}
    \small
    \begin{tabular}{@{}ccccccccccc@{}}
        \hline
        \toprule
        \textbf{Augmentation} & \textbf{Dataset} & \textbf{\# of layers} & \textbf{\begin{tabular}[c]{@{}c@{}}learing \\ rate\end{tabular}} & \textbf{\begin{tabular}[c]{@{}c@{}}RNN \\ type\end{tabular}} & \textbf{\begin{tabular}[c]{@{}c@{}}dropout \\ ratio\end{tabular}} & \textbf{\begin{tabular}[c]{@{}c@{}}dropedge \\ ratio\end{tabular}} & \textbf{$\alpha$} & \textbf{$\beta$} & \textbf{$\epsilon$} & \textbf{\begin{tabular}[c]{@{}c@{}}symmetric \\ trick\end{tabular}} \\
        \midrule
        \multirow{4}{*}{\textsc{None}} & Brain & 2 & 0.01 & \multirow{4}{*}{-} & 0 & \multirow{4}{*}{-} & \multirow{4}{*}{-} & \multirow{4}{*}{-} & \multirow{4}{*}{-} & \multirow{4}{*}{-} \\
        & Reddit & 3 & 0.05 & & 0 & & & & & \\
        & DBLP3 & 2 & 0.05 & & 0 & & & & & \\
        & DBLP5 & 2 & 0.01 & & 0 & & & & & \\
        \midrule
        \multirow{4}{*}{\textsc{DropEdge}} & Brain & 2 & 0.01 & \multirow{4}{*}{-} & 0.5 & 0.1 & \multirow{4}{*}{-} & \multirow{4}{*}{-} & \multirow{4}{*}{-} & \multirow{4}{*}{-} \\
        & Reddit & 3 & 0.05 & & 0.5 & 0.8 & & & & \\
        & DBLP3 & 2 & 0.05 & & 0 & 0.3 & & & & \\
        & DBLP5 & 2 & 0.05 & & 0.5 & 0.6 & & & & \\
        \midrule
        \multirow{4}{*}{\textsc{GDC}} & Brain & 2 & 0.01 & \multirow{4}{*}{-} & 0 & \multirow{4}{*}{-} & 0.5 & \multirow{4}{*}{-} & \multirow{4}{*}{0.001} & \multirow{4}{*}{Off} \\
        & Reddit & 3 & 0.05 & & 0 & & 0.3 & & & \\
        & DBLP3 & 2 & 0.05 & & 0 & & 0.3 & & & \\
        & DBLP5 & 2 & 0.05 & & 0 & & 0.05 & & & \\
        \midrule
        \multirow{4}{*}{\textsc{Merge}} & Brain & 2 & 0.02 & \multirow{4}{*}{-} & 0 & \multirow{4}{*}{-} & \multirow{4}{*}{-} & \multirow{4}{*}{-} & \multirow{4}{*}{-} & \multirow{4}{*}{-} \\
        & Reddit & 3 & 0.02 & & 0 & & & & & \\
        & DBLP3 & 2 & 0.01 & & 0 & & & & & \\
        & DBLP5 & 2 & 0.05 & & 0 & & & & & \\
        \midrule
        \multirow{4}{*}{\method} & Brain & 2 & 0.01 & \multirow{4}{*}{-} & 0 & \multirow{4}{*}{-} & 0.1 & 0.8 & \multirow{4}{*}{0.001} & Off \\
        & Reddit & 3 & 0.05 & & 0 & & 0.2 & 0.4 & & Off \\
        & DBLP3 & 2 & 0.05 & & 0.5 & & 0.1 & 0.4 & & On \\
        & DBLP5 & 2 & 0.01 & & 0.5 & & 0.001 & 0.009 & & On \\ 
        \bottomrule
        \hline
    \end{tabular}
\end{table*}

\def\arraystretch{1.2}
\setlength{\tabcolsep}{6.6pt}
\begin{table*}[t]
    \caption{Searched hyperparameters of GCRN in the temporal link prediction task}
    \label{tab:hyperparam:gcrn:link}
    \small
    \begin{tabular}{@{}ccccccccccc@{}}
        \hline
        \toprule
        \textbf{Augmentation} & \textbf{Dataset} & \textbf{\# of layers} & \textbf{\begin{tabular}[c]{@{}c@{}}learing \\ rate\end{tabular}} & \textbf{\begin{tabular}[c]{@{}c@{}}RNN \\ type\end{tabular}} & \textbf{\begin{tabular}[c]{@{}c@{}}dropout \\ ratio\end{tabular}} & \textbf{\begin{tabular}[c]{@{}c@{}}dropedge \\ ratio\end{tabular}} & \textbf{$\alpha$} & \textbf{$\beta$} & \textbf{$\epsilon$} & \textbf{\begin{tabular}[c]{@{}c@{}}symmetric \\ trick\end{tabular}} \\
        \midrule
        \multirow{3}{*}{\textsc{None}} & BitcoinAlpha & 3 & 0.05 & LSTM & 0 & \multirow{3}{*}{-} & \multirow{3}{*}{-} & \multirow{3}{*}{-} & \multirow{3}{*}{-} & \multirow{3}{*}{-} \\
        & WikiElec & 2 & 0.05 & LSTM & 0 & & & & & \\
        & RedditBody & 3 & 0.02 & LSTM & 0 & & & & & \\
        \midrule
        \multirow{3}{*}{\textsc{DropEdge}} & BitcoinAlpha & 3 & 0.05 & LSTM & 0.5 & 0.5 & \multirow{3}{*}{-} & \multirow{3}{*}{-} & \multirow{3}{*}{-} & \multirow{3}{*}{-} \\
        & WikiElec & 2 & 0.05 & LSTM & 0.5 & 0.8 & & & & \\
        & RedditBody & 3 & 0.02 & LSTM & 0 & 0.7 & & & & \\
        \midrule
        \multirow{3}{*}{\textsc{GDC}} & BitcoinAlpha & 3 & 0.05 & LSTM & 0.5 & \multirow{3}{*}{-} & 0.2 & \multirow{3}{*}{-} & \multirow{3}{*}{0.001} & \multirow{3}{*}{On} \\
        & WikiElec & 2 & 0.05 & LSTM & 0.5 & & 0.1 & & & \\
        & RedditBody & 3 & 0.02 & LSTM & 0.5 & & 0.3 & & & \\
        \midrule
        \multirow{3}{*}{\textsc{Merge}} & BitcoinAlpha & 3 & 0.05 & LSTM & 0.5 & \multirow{3}{*}{-} & \multirow{3}{*}{-} & \multirow{3}{*}{-} & \multirow{3}{*}{-} & \multirow{3}{*}{-} \\
        & WikiElec & 3 & 0.05 & LSTM & 0.5 & & & & & \\
        & RedditBody & 3 & 0.01 & LSTM & 0 & & & & & \\
        \midrule
        \multirow{3}{*}{\method} & BitcoinAlpha & 3 & 0.05 & LSTM & 0.5 & \multirow{3}{*}{-} & 0.1 & 0.3 & \multirow{3}{*}{0.001} & \multirow{3}{*}{On} \\
        & WikiElec & 2 & 0.05 & LSTM & 0.5 & & 0.1 & 0.3 & & \\
        & RedditBody & 3 & 0.02 & LSTM & 0.1 & & 0.1275 & 0.7225 & & \\ 
        \bottomrule
        \hline
    \end{tabular}
\end{table*}

\def\arraystretch{1.2}
\setlength{\tabcolsep}{8.4pt}
\begin{table*}[t]
    \caption{Searched hyperparameters of GCRN in the node classification task}
    \label{tab:hyperparam:gcrn:node}
    \small
    \begin{tabular}{@{}ccccccccccc@{}}
        \hline
        \toprule
        \textbf{Augmentation} & \textbf{Dataset} & \textbf{\# of layers} & \textbf{\begin{tabular}[c]{@{}c@{}}learing \\ rate\end{tabular}} & \textbf{\begin{tabular}[c]{@{}c@{}}RNN \\ type\end{tabular}} & \textbf{\begin{tabular}[c]{@{}c@{}}dropout \\ ratio\end{tabular}} & \textbf{\begin{tabular}[c]{@{}c@{}}dropedge \\ ratio\end{tabular}} & \textbf{$\alpha$} & \textbf{$\beta$} & \textbf{$\epsilon$} & \textbf{\begin{tabular}[c]{@{}c@{}}symmetric \\ trick\end{tabular}} \\
        \midrule
        \multirow{4}{*}{\textsc{None}} & Brain & 2 & 0.02 & GRU & 0 & \multirow{4}{*}{-} & \multirow{4}{*}{-} & \multirow{4}{*}{-} & \multirow{4}{*}{-} & \multirow{4}{*}{-} \\
        & Reddit & 2 & 0.01 & LSTM & 0 & & & & & \\
        & DBLP3 & 3 & 0.02 & GRU & 0 & & & & & \\
        & DBLP5 & 2 & 0.01 & GRU & 0 & & & & & \\
        \midrule
        \multirow{4}{*}{\textsc{DropEdge}} & Brain & 2 & 0.02 & GRU & 0 & 0.6 & \multirow{4}{*}{-} & \multirow{4}{*}{-} & \multirow{4}{*}{-} & \multirow{4}{*}{-} \\
        & Reddit & 2 & 0.05 & LSTM & 0.5 & 0.1 & & & & \\
        & DBLP3 & 2 & 0.02 & LSTM & 0.5 & 0.8 & & & & \\
        & DBLP5 & 2 & 0.01 & GRU & 0.5 & 0.8 & & & & \\
        \midrule
        \multirow{4}{*}{\textsc{GDC}} & Brain & 2 & 0.01 & LSTM & 0 & \multirow{4}{*}{-} & 0.4 & \multirow{4}{*}{-} & \multirow{4}{*}{0.001} & \multirow{4}{*}{Off} \\
        & Reddit & 2 & 0.02 & LSTM & 0 & & 0.1 & & & \\
        & DBLP3 & 2 & 0.02 & LSTM & 0 & & 0.5 & & & \\
        & DBLP5 & 2 & 0.02 & LSTM & 0 & & 0.3 & & & \\
        \midrule
        \multirow{4}{*}{\textsc{Merge}} & Brain & 2 & 0.02 & GRU & 0 & \multirow{4}{*}{-} & \multirow{4}{*}{-} & \multirow{4}{*}{-} & \multirow{4}{*}{-} & \multirow{4}{*}{-} \\
        & Reddit & 2 & 0.01 & LSTM & 0 & & & & & \\
        & DBLP3 & 2 & 0.02 & LSTM & 0 & & & & & \\
        & DBLP5 & 2 & 0.01 & GRU & 0 & & & & & \\
        \midrule
        \multirow{4}{*}{\method} & Brain & 2 & 0.01 & LSTM & 0 & \multirow{4}{*}{-} & 0.5 & 0.4 & \multirow{4}{*}{0.001} & \multirow{4}{*}{Off} \\
        & Reddit & 2 & 0.02 & LSTM & 0 & & 0.4 & 0.5 & & \\
        & DBLP3 & 2 & 0.02 & LSTM & 0 & & 0.4 & 0.5 & & \\
        & DBLP5 & 2 & 0.02 & LSTM & 0.5 & & 0.1 & 0.8 & & \\ 
        \bottomrule
        \hline
    \end{tabular}
\end{table*}

\def\arraystretch{1.2}
\setlength{\tabcolsep}{8pt}
\begin{table*}[t]
    \caption{Searched hyperparameters of EvolveGCN (EGCN) in the temporal link prediction task}
    \label{tab:hyperparam:egcn:link}
    \small
    \begin{tabular}{@{}ccccccccccc@{}}
        \hline
        \toprule
        \textbf{Augmentation} & \textbf{Dataset} & \textbf{\# of layers} & \textbf{\begin{tabular}[c]{@{}c@{}}learing \\ rate\end{tabular}} & \textbf{\begin{tabular}[c]{@{}c@{}}RNN \\ type\end{tabular}} & \textbf{\begin{tabular}[c]{@{}c@{}}dropout \\ ratio\end{tabular}} & \textbf{\begin{tabular}[c]{@{}c@{}}dropedge \\ ratio\end{tabular}} & \textbf{$\alpha$} & \textbf{$\beta$} & \textbf{$\epsilon$} & \textbf{\begin{tabular}[c]{@{}c@{}}symmetric \\ trick\end{tabular}} \\ 
        \midrule
        \multirow{3}{*}{\textsc{None}} & BitcoinAlpha & 3 & 0.01 & LSTM & 0 & \multirow{3}{*}{-} & \multirow{3}{*}{-} & \multirow{3}{*}{-} & \multirow{3}{*}{-} & \multirow{3}{*}{-} \\
        & WikiElec & 2 & 0.05 & LSTM & 0 & & & & & \\
        & RedditBody & 3 & 0.02 & LSTM & 0 & & & & & \\
        \midrule
        \multirow{3}{*}{\textsc{DropEdge}} & BitcoinAlpha & 3 & 0.01 & LSTM & 0 & 0.8 & \multirow{3}{*}{-} & \multirow{3}{*}{-} & \multirow{3}{*}{-} & \multirow{3}{*}{-} \\
        & WikiElec & 2 & 0.05 & LSTM & 0 & 0.1 & & & & \\
        & RedditBody & 3 & 0.02 & LSTM & 0 & 0.5 & & & & \\
        \midrule
        \multirow{3}{*}{\textsc{GDC}} & BitcoinAlpha & 3 & 0.01 & LSTM & 0.5 & \multirow{3}{*}{-} & 0.2 & \multirow{3}{*}{-} & \multirow{3}{*}{0.001} & \multirow{3}{*}{On} \\
        & WikiElec & 2 & 0.05 & LSTM & 0 & & 0.1 & & & \\
        & RedditBody & 3 & 0.02 & LSTM & 0.5 & & 0.2 & & & \\
        \midrule
        \multirow{3}{*}{\textsc{Merge}} & BitcoinAlpha & 3 & 0.05 & LSTM & 0.5 & \multirow{3}{*}{-} & \multirow{3}{*}{-} & \multirow{3}{*}{-} & \multirow{3}{*}{-} & \multirow{3}{*}{-} \\
        & WikiElec & 3 & 0.05 & GRU & 0 & & & & & \\
        & RedditBody & 3 & 0.05 & LSTM & 0 & & & & & \\
        \midrule
        \multirow{3}{*}{\method} & BitcoinAlpha & 3 & 0.01 & LSTM & 0.5 & \multirow{3}{*}{-} & 0.1 & 0.4 & \multirow{3}{*}{0.001} & \multirow{3}{*}{On} \\
        & WikiElec & 2 & 0.05 & LSTM & 0.5 & & 0.3 & 0.4 & & \\
        & RedditBody & 3 & 0.02 & LSTM & 0.5 & & 0.1 & 0.2 & & \\ 
        \bottomrule
    \end{tabular}
\end{table*}

\def\arraystretch{1.2}
\setlength{\tabcolsep}{7.4pt}
\begin{table*}[!t]
    \caption{Searched hyperparameters of EvolveGCN (EGCN) in the node classification task}
    \label{tab:hyperparam:egcn:node}
    \small
    \begin{tabular}{@{}ccccccccccc@{}}
        \hline
        \toprule
        \textbf{Augmentation} & \textbf{Dataset} & \textbf{\# of layers} & \textbf{\begin{tabular}[c]{@{}c@{}}learing \\ rate\end{tabular}} & \textbf{\begin{tabular}[c]{@{}c@{}}RNN \\ type\end{tabular}} & \textbf{\begin{tabular}[c]{@{}c@{}}dropout \\ ratio\end{tabular}} & \textbf{\begin{tabular}[c]{@{}c@{}}dropedge \\ ratio\end{tabular}} & \textbf{$\alpha$} & \textbf{$\beta$} & \textbf{$\epsilon$} & \textbf{\begin{tabular}[c]{@{}c@{}}symmetric \\ trick\end{tabular}} \\
        \midrule
        \multirow{4}{*}{\textsc{None}} & Brain & 2 & 0.01 & LSTM & 0 & \multirow{4}{*}{-} & \multirow{4}{*}{-} & \multirow{4}{*}{-} & \multirow{4}{*}{-} & \multirow{4}{*}{-} \\
        & Reddit & 2 & 0.02 & LSTM & 0 & & & & & \\
        & DBLP3 & 2 & 0.02 & LSTM & 0 & & & & & \\
        & DBLP5 & 2 & 0.02 & LSTM & 0 & & & & & \\
        \midrule
        \multirow{4}{*}{\textsc{DropEdge}} & Brain & 2 & 0.01 & LSTM & 0 & 0.6 & \multirow{4}{*}{-} & \multirow{4}{*}{-} & \multirow{4}{*}{-} & \multirow{4}{*}{-} \\
        & Reddit & 2 & 0.02 & LSTM & 0 & 0.5 & & & & \\
        & DBLP3 & 2 & 0.02 & LSTM & 0.5 & 0.2 & & & & \\
        & DBLP5 & 2 & 0.02 & LSTM & 0.5 & 0.1 & & & & \\
        \midrule
        \multirow{4}{*}{\textsc{GDC}} & Brain & 2 & 0.01 & LSTM & 0 & \multirow{4}{*}{-} & 0.5 & \multirow{4}{*}{-} & \multirow{4}{*}{0.001} & \multirow{4}{*}{Off} \\
        & Reddit & 2 & 0.02 & LSTM & 0 & & 0.1 & & & \\
        & DBLP3 & 2 & 0.02 & LSTM & 0 & & 0.01 & & & \\
        & DBLP5 & 2 & 0.02 & LSTM & 0 & & 0.5 & & & \\
        \midrule
        \multirow{4}{*}{\textsc{Merge}} & Brain & 2 & 0.05 & LSTM & 0 & \multirow{4}{*}{-} & \multirow{4}{*}{-} & \multirow{4}{*}{-} & \multirow{4}{*}{-} & \multirow{4}{*}{-} \\
        & Reddit & 2 & 0.01 & LSTM & 0 & & & & & \\
        & DBLP3 & 2 & 0.01 & LSTM & 0 & & & & & \\
        & DBLP5 & 2 & 0.02 & GRU & 0 & & & & & \\
        \midrule
        \multirow{4}{*}{\method} & Brain & 2 & 0.01 & LSTM & 0 & \multirow{4}{*}{-} & 0.1 & 0.5 & \multirow{4}{*}{0.001} & Off \\
        & Reddit & 2 & 0.02 & LSTM & 0.05 & & 0.002 & 0.008 & & Off \\
        & DBLP3 & 2 & 0.02 & LSTM & 0.5 & & 0.1 & 0.5 & & On \\
        & DBLP5 & 2 & 0.02 & LSTM & 0.7 & & 0.1 & 0.8 & & Off \\
        \bottomrule
        \hline
    \end{tabular}
\end{table*}

\end{document}